\theoremstyle{plain}
\newtheorem{theorem}{Theorem}[section]
\theoremstyle{definition}
\newtheorem{definition}[theorem]{Definition}
\theoremstyle{remark}
\newcommand\bigforall{\mbox{\Large $\mathsurround0pt\forall$}} 
\icmltitlerunning{Are Random Decompositions all we need in High Dimensional Bayesian Optimisation?}
\begin{document}

\twocolumn[
\icmltitle{Are Random Decompositions all we need in High Dimensional Bayesian Optimisation?}



\icmlsetsymbol{equal}{*}

\begin{icmlauthorlist}
\icmlauthor{Juliusz Ziomek}{huawei}
\icmlauthor{Haitham Bou-Ammar}{huawei}

\end{icmlauthorlist}

\icmlaffiliation{huawei}{Huawei Noah’s Ark Lab, London, UK}

\icmlcorrespondingauthor{Haitham Bou-Ammar}{haitham [dot] ammmar (at) huawei \{dot\} com}

\icmlkeywords{Machine Learning, ICML}

\vskip 0.3in
]



\printAffiliationsAndNotice{}  
\begin{abstract}

Learning decompositions of expensive-to-evaluate black-box functions promises to scale Bayesian optimisation (BO) to high-dimensional problems. However, the success of these techniques depends on finding proper decompositions that accurately represent the black-box. While previous works learn those decompositions based on data, we investigate  data-independent decomposition sampling rules in this paper. We find that data-driven learners of decompositions can be easily misled towards local decompositions that do not hold globally across the search space. Then, we formally show that a random tree-based decomposition sampler exhibits favourable theoretical guarantees that effectively trade off maximal information gain and functional mismatch between the actual black-box and its surrogate as provided by the decomposition. Those results motivate the development of the random decomposition upper-confidence bound algorithm (RDUCB) that is straightforward to implement - (almost) plug-and-play - and, surprisingly, yields significant empirical gains compared to the previous state-of-the-art on a comprehensive set of benchmarks. We also confirm the plug-and-play nature of our modelling component by integrating our method with HEBO \cite{cowen2022hebo}, showing improved practical gains in the highest dimensional tasks from the Bayesmark problem suite.  
\end{abstract}

\section{Introduction}
Although Bayesian optimisation (BO) demonstrated impressive successes in low-dimensional domains \cite{marchant2012bayesian,shahriari2015taking,kandasamy2017multi,kandasamy2018parallelised, grosnit2022boils}, scaling BO to high-dimensional and expensive to evaluate black-box functions has proved challenging. Among the many proposed approaches ranging from linear to non-linear projections \cite{wang2016bayesian,rana2017high, li2018high, tripp2020sample, moriconi2020high, eriksson2021high, grosnit2021high, wan2021think}, decomposition methods that assume additively structured black-boxes emerged as a promising direction for high-dimensional BO \cite{kandasamy2015high,rolland2018high,han2021high}.     

Given a decomposition in the dimensions of the problem, those techniques utilise additive kernel Gaussian processes (GPs) as surrogate models to trade off exploration and exploitation when suggesting novel queries to evaluate. Additive techniques uncover new inputs by maximising an acquisition function that is additive under the provided decomposition. Although first and second-order methods \cite{wilson2017reparameterization} can be used to maximise acquisition functions, we adopt message-passing optimisers that can better exploit additive acquisition structures \cite{rolland2018high}. 

The success of additive methods in high-dimensional BO depends on the correct choice of decompositions that must accurately mimic the inter-dimensional dependencies of the actual black-box function. Prior art empirically demonstrated that tree-structured decompositions (i.e., cycle-free pair-wise dimensional interactions) could effectively represent many black-box functions. The Tree algorithm \cite{han2021high} discovers the best tree decomposition for the black-box based on the data collected during BO, by maximising a new GP marginal that includes decomposition parameters (encoding sparsity) and length scales. 

At first glance, learning decompositions based on data and marginal likelihood is plausible and can yield promising optimisation results. While this is true  when given a fixed dataset, dynamically acquired data during BO provides, at best, local (within the probed regions) function information, making it challenging to extrapolate dimensional interdependencies across the search space. In other words, such agents are easily misled by modifying the black-box function's factorisation based on regions of the search space (for example see Section \ref{Sec:Th}). 

{\textbf{Contributions:}} Of course, one can think of many directions to resolve this issue, like analysing distribution shifts \cite{kirschner2020distributionally}, maximising estimators beyond marginals \cite{ziegel2003elements}, or even designing novel Gaussian process kernels. In our work, however, we prefer to develop a simple (almost) plug-and-play approach that leads to empirical gains while adhering to rigorous theoretical guarantees. Therefore, rather than relying on data-driven learning, we \emph{investigate adopting data-independent decomposition rules}. Our theoretical results indicate that random decomposition sampling strategies achieve the lowest expected mismatch to the black-box function when we fix the class of decompositions to trees, which allows us to favourably bound maximal information gain. Equipped with these results, we then propose the random decomposition upper-confidence bound (RDUCB) algorithm. RDUCB utilises a random tree sampler and an additive acquisition function to achieve superior empirical performance on a broad set of benchmarks compared to the prior state-of-the-art. The modelling component of RDUCB is simple to implement and thus easy to integrate on top of many existing BO frameworks. We support this claim by augmenting HEBO \cite{cowen2022hebo} - the winning submission of the NeurIPS 2020 black-box optimisation challenge \cite{turner2021bayesian} - with our random decomposition sampling strategy. We demonstrate that this adaptation, which we title RDHEBO, delivers improved performance on the set of highest dimensional Bayesmark \cite{asuncion2007uci, turner2021bayesian} tasks.    

\section{Background}
\subsection{Bayesian Optimisation (BO)} \label{Sec:BO}
We employ a sequential decision-making approach to the maximisation of expensive-to-evaluate black-box functions $f:\mathcal{X} \rightarrow \mathbb{R}$ over an input domain $\mathcal{X} \subseteq \mathbb{R}^d$. At each round $t$, we determine an input $\bm{x}_{t} \in \mathcal{X}$ and observe its black-box function value $f(\bm{x}_t)$. We allow noise-corrupted observations such that $y_t \sim \mathcal{N}(\bm{x}_t, \sigma_{\text{n}}^2)$. Our goal is to approach the optimum, $\bm{x}^{\star} \equiv \arg\max_{\bm{x}\in\mathcal{X}} f(\bm{x})$, rapidly as a function of black-box queries. Given that both the black-box function and optima are unknown, BO solvers trade-off exploration and exploitation via a two-step procedure involving fitting a surrogate model and maximising an acquisition function. We detail each of those steps below. 

{\textbf{Gaussian Process Surrogates (GPs):}} GPs allow us to place priors directly in the function space by specifying a mean function $m(\bm{x})$ and a covariance kernel $k(\bm{x}, \bm{x}^{\prime})$ that encode our assumptions about the black-box. Following \cite{williams2006gaussian}, we assume a zero-mean $m(\bm{x}) \equiv 0$ and adopt a squared exponential covariance kernel:
\begin{equation*}
    k(\bm{x}, \bm{x}^{\prime}) = \exp\left(-\frac{1}{2}(\bm{x} - \bm{x}^{\prime})^{\mathsf{T}} \text{diag}(\bm{\theta}^{2})^{-1}(\bm{x} - \bm{x}^{\prime})\right),
\end{equation*}
where $\bm{\theta}$ is a set of hyper-parameters tuned by maximising the data marginal\footnote{In our equation, we execute $\bm{\theta}^2$ element-wise.}. 

Given the data $\mathcal{D}_t = \{\bm{x}_{i}, y_{i}\}_{i=1}^t$ collected so-far during $t$ BO rounds, we write the posterior at a point $\bm{x}$ as $p(f(\bm{x})|\mathcal{D}_t) \sim \mathcal{N}(\mu_t(\bm{x}), \sigma_{t}(\bm{x}))$ with:
\begin{align*}
    \mu_t(\bm{x}) &= \bm{k}_t^{\mathsf{T}}(\bm{x})\left(\bm{K}_t + \sigma_{\text{n}}^2\bm{I}\right)^{-1} \bm{y}_t \\ 
    \sigma_t(\bm{x})^2 &= k(\bm{x},\bm{x}) - \bm{k}_t^{\mathsf{T}}(\bm{x})\left(\bm{K}_t + \sigma_{\text{n}}^2\bm{I}\right)^{-1}\bm{k}_t (\bm{x}),
\end{align*}
where $\bm{y}_{t}$ concatenates all observations $y_{1:t}$ in one vector and $\bm{I}$ represents an identity matrix. The matrix $\bm{K}_t$ evaluates the covariance kernel on all input pairs in $\mathcal{D}_t$. Finally, the vector $\bm{k}_t$ contains the kernel evaluation between $\bm{x}$ and all input data points from $\mathcal{D}_t$. 

{\textbf{Acquisition Function Maximisation:}} Given the posterior predictive distribution above, we now discuss how to suggest novel query points that improve the guess of the optima. In BO, this process involves maximising an acquisition function $\alpha(\cdot|\mathcal{D}_t)$ that utilises the probabilistic model such that $\bm{x}_{t} \in \arg\max_{\bm{x}\in \mathcal{X}}\alpha_t(\bm{x}|\mathcal{D}_{t-1})$. While there exist many acquisitions ranging from myopic to non-myopic forms \cite{frazier2008knowledge,grosnit2021we, cowen2022hebo, shahriari2015taking, zhang2021constrained}, in this paper, we follow \cite{srinivas2009gaussian,srinivas2012information} and adapt the upper-confidence bound as the function of choice: 
\begin{equation*}
    \alpha_t^{(\text{UCB})}(\bm{x}|\mathcal{D}_{t-1}) = \mu_{t-1} (\bm{x}) + \beta_t \sigma_{t-1}(\bm{x}), 
\end{equation*}
where $\beta_t$ is a tuneable hyperparameter.


\subsection{High-Dimensional BO with Decompositions}
BO in high-dimensional spaces is an active area of research. In Section \ref{Sec:RelatedWork}, we survey related methods to our work. In this section, we focus on decomposition-based strategies that promise to scale BO while accelerating acquisition optimisation. Before diving into the details of decomposition-based techniques, we now introduce the notion of decompositions of $d$-dimensional spaces as follows: 
\begin{definition}
\textit{A decomposition $g$ of $d$-dimensions is a collection of sets $c$, called components, consisting of dimensions $i \in [1:d]$, i.e. $\forall_{c \in g}\forall_{i \in c} i \in [1:d]$. }
\end{definition}
Decomposition-based BO assumes that the black-box function decomposes according to $g$: 
$f(\bm{x}) = \sum_{c \in g} f_c\left(\bm{x}_{[c]}\right)$, where  $\bm{x}_{[c]}$ selects those dimensions of $\bm{x}$ that appear in $c$.  

{\textbf{Additive GP Kernels:}} Compared to standard BO from Section \ref{Sec:BO}, the first change decomposition methods employ is the usage of additive kernels \cite{duvenaud2011additive,durrande2012additive,qamar2014additive,lu2022additive} that better suit a decomposable black-box function: $k^g(\bm{x}, \bm{x'}) = \sum_{c \in g} k_c\left(\bm{x}_{[c]}, \bm{x'}_{[c]}\right)$. Significantly, if two dimensions $i$ and $j$ do not appear together in any of the sets $c$, the kernel will not model interactions between them. \citet{rolland2018high} showed that the posterior of each component subfunction $f_c\left(\bm{x}_{[c]}\right)$ can be expressed as $p \left(f_c\left(\bm{x}_{[c]}\right)| \mathcal{D}_t \right) = \mathcal{N}\left(\mu_{t,c}\left(\bm{x}_{[c]}\right), \sigma^{2}_{t,c}\left(\bm{x}_{[c]}\right) \right)$, where:
\begin{align*}
    \mu_{t,c}\left(\bm{x}_{[c]}\right) &= \bm{k}^{\mathsf{T}}_{t,c}\left(\bm{x}_{[c]}\right) (\bm{K}_t + \sigma_{\text{n}}^2\bm{I})^{-1}\bm{y}_t\\
    \sigma^{2}_{t,c}\left(\bm{x}_{[c]}\right) &= k_{c}\left(\bm{x}_{[c]}, \bm{x}_{[c]}\right) \\ & \hspace{3.5em}- \bm{k}_{t,c}^{\mathsf{T}}\left(\bm{x}_{[c]}\right)(\bm{K}_t+\sigma_{n}^2\bm{I})^{-1}\bm{k}_{t,c}\left(\bm{x}_{[c]}\right),
\end{align*}
where $\bm{k}_{t,c}(\bm{x}_{[c]})$ is a vector of kernel evaluations between $\bm{x}$ and all inputs in $\mathcal{D}_t$ while only considering those dimensions $i \in c$ that appear in $c$. 

If the size of each set $c$ is much smaller than the total dimensionality $d$, such an approach will enjoy many benefits. First, we escape the curse of dimensionality, as we only need to consider interactions between a small number of dimensions. Second, if we utilise an acquisition function with an additive structure, such as the additive UCB: $ \alpha^{(\text{add-UCB})}_t(\bm{x}|\mathcal{D}_{t-1}) = \sum_{c \in g} \alpha_{t,c}(\bm{x}|\mathcal{D}_{t-1}) = \sum_{c \in g} \mu_{t-1, c}(\bm{x}) + \beta_t \sigma_{t-1, c}(\bm{x})$ we can determine novel query points to evaluate very efficiently using message passing \cite{rolland2018high}.

However, this is only possible if we know the decomposition of the black box. Existing methods attempt to learn it from data using maximum likelihood \cite{kandasamy2015high,rolland2018high, han2021high}, but there are no guarantees regarding the correctness of such a learning procedure. We expand on the problems associated with this approach in the next section.

\section{Misleading Decomposition Learners}\label{Sec:Th}
As noted in the previous section, decomposition methods learn the optimal additive kernel structure by selecting the one which produces a model with the highest marginal likelihood. While plausible and generally adopted, we now point out some problems associated with likelihood maximisation when learning decompositions in BO. Before we underpin this issue from a theoretical perspective, we first provide an empirical example that demonstrates those challenges next.   

\subsection{Challenges to Decompositions Learners} 
Consider the maximisation of the function demonstrated by the heat plot shown in Figure \ref{fig:toyproblem}. It is easy to see that while this function is not fully separable due to the correlated mode in the top-right corner, it can locally appear as if it was entirely separable. 

Imagine that we collected several initial points from the region $0 \le x \le 600$ and $0 \le y \le 600$. Methods that rely entirely on data when learning the kernel structure (e.g., by maximising marginals) would exploit the local view of the function and falsely believe a complete separation in dimensions. We support this realisation in Figure \ref{fig:toyproblem} by running the state-of-the-art Tree algorithm from \cite{han2021high} that learns decompositions via maximum likelihood.    

\begin{figure}[h!]
    \centering
    \includegraphics[trim={8em 2em 9em 3em}, clip=true, width=\columnwidth]{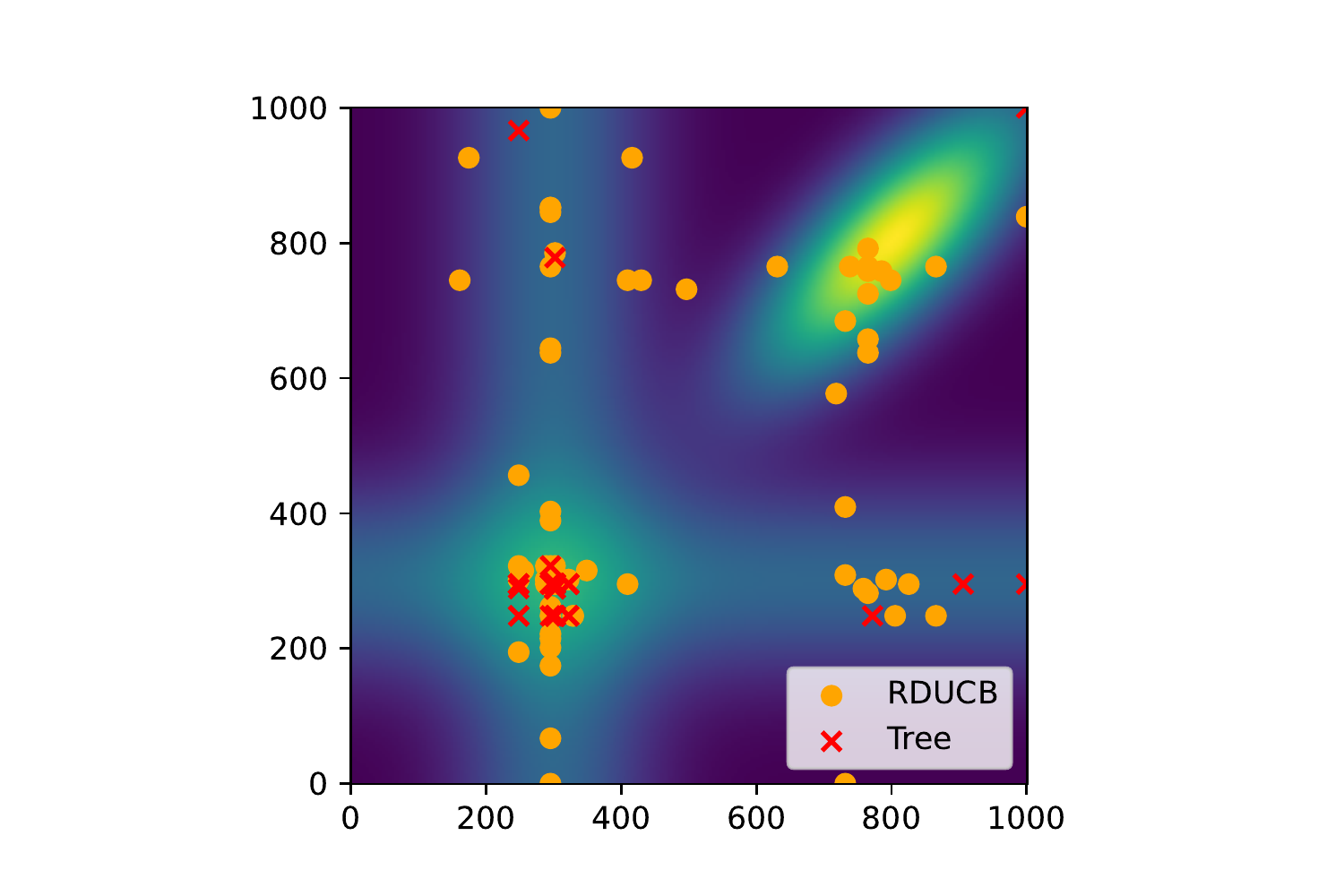}
    \caption{Comparison of points queried on a toy problem, given the same initial design. Tree refers to the method from \cite{han2021high}, while RDUCB is the algorithm we propose in this paper. It is clear that Tree gets stuck in the local mode, while RDUCB eventually arrives at the optimal mode. For details see App. \ref{App:toy_problem}.}
    \label{fig:toyproblem}
\end{figure}
We see that the Tree algorithm  gets stuck in the local mode because of erroneously learning that the function is fully decomposable. In contrast, our algorithm RDUCB, which we propose throughout the paper, circumvents this problem, whereby given the same initial design and settings of Tree, it can still find the optimal mode.



\subsection{What Causes the Failure?} \label{Sec:Failure}
To better understand the above failure mode, we notice that approaches learning decompositions from data (e.g., Tree \cite{han2021high}) assume that the function abides by one decomposition that does not vary across the search space, which is not the case in Figure \ref{fig:toyproblem}. 

To improve decomposition methods in high-dimensional BO, we wish to develop algorithms that enable varying decompositions across the search space. We could formalise this problem by imagining an adversary choosing a decomposition, $g$, and a corresponding black-box function, $f(\cdot)$, to optimise. This function must be a member of the reproducing kernel Hilbert space $\mathcal{H}^g$ defined by some kernel $k^g(\bm{x},\bm{x}^{\prime})$, following the selected decomposition $g$, unknown to the algorithm. It is important to note that such an adversary can select a black-box function that globally follows a decomposition $g$ but locally appears to have fewer interaction components than $g$. As we cannot rely on  locally collected data, we wish to investigate data-independent rules while ensuring a rigorous theoretical understanding.

{\textbf{On Data Independent Kernel Updates:}} Although we are the first to propose data-independent updates to select decompositions in BO, it is worth noting that the work in \cite{Berkenkamp} already considered data-independent rules but when tuning kernel hyper-parameters with no focus on decompositions. The authors demonstrated no-regret bounds for an algorithm that alters the GP kernel's hyperparameters according to a predefined scheme that does not rely on data gathered during BO. 

However, the direct application of the work in \cite{Berkenkamp} to learning decompositions is challenging for several reasons. First, our problem setting varies in that the authors in \cite{Berkenkamp} consider low-dimensional BO that does not require learning kernel decompositions. Second and more importantly, the update rule in \cite{Berkenkamp} keeps decreasing the length scales, expanding the kernel's complexity as measured by maximum information gain \cite{srinivas2009gaussian}, which will play a critical role in trading-off complexity versus mismatch as we note in Section \ref{Sec:AnalyseDecom}. 


Next, we expand on how to design such data-independent decomposition schemes and note that sparse random trees serve as a simple, effective, and scalable strategy. 

\section{Decompositions Without Learning} \label{Sec:Decompositions}
To better understand what properties constitute good predefined decomposition rules, we begin with a theoretical study that hints at the necessary trade-offs our strategy needs to optimise. Let us introduce $S(t): \mathbb{Z}^+ \to \mathcal{G}$ to be a predefined (data-independent) scheme that selects a decomposition from some class of decompositions $\mathcal{G}$ at each round $t$. Consequently, $S(t)$ defines the kernel $k_t (\bm{x},\bm{x}^{\prime}) = k^{g_t} (\bm{x},\bm{x}^{\prime})$ that we use during that round $t$. 

We wish to derive a high-probability regret bound in terms of 1) a quantity roughly measuring the kernel's complexity and 2) a notion of function mismatch between the true black-box and those functions spanned by our kernel $k_t(\bm{x},\bm{x}^{\prime})$. For kernel complexity, we follow \cite{srinivas2009gaussian} and use the maximum information gain $\gamma_T$ of kernels defined by the decompositions proposed by our scheme\footnote{Of course in \cite{srinivas2009gaussian} decompositions are not considered. Here, we define a slight generalisation of maximal information gain to handle the case of a changing kernel.}. Here, $\gamma_{T} = \max_{\bm{X} \subseteq \mathcal{X}, |\bm{X}| = T} I(\bm{f}_T, \bm{y}_{T})$ denotes the maximum information gain after $T$ steps, where $\bm{X}$ is a set of $T$ selected points and $\bm{f}_T = (\hat{f}_1(\bm{x}_1), \dots, \hat{f}_T(\bm{x}_T))$, where $\hat{f}_t = \sum_{c \in g_t} f_c$ and  $f_c \sim \mathcal{GP}(0,k_c(\bm{x},\bm{x}^{\prime}))$. Concerning function mismatch, we define $\epsilon_t = |\hat{f}_t - f|_{\infty}$ such that $\hat{f}_t = \arg\min_{f'\in \mathcal{H}_{t}}|f' - f|_{\infty}$ is the function from the reproducing kernel Hilbert space (RKHS) $\mathcal{H}_{t}$ of kernel $k_t (\bm{x},\bm{x}^{\prime})$ that is closest to the black-box $f(\cdot)$ in terms of infinity norm. This way $\epsilon_t$ provides a notion of mismatch between the actual black-box and the closest function from the RKHS of $k_{t}(\bm{x},\bm{x}^{\prime})$. Now, if we run a UCB-style BO algorithm for $T$ rounds, we obtain the following result.
\begin{restatable}[]{theorem}{regretbound} \label{th:regretbound}
Let the black-box function $f$ be selected by an adversary from an RKHS $\mathcal{H}^g$ of kernel $k^g$, defined over some decomposition $g$ that is also selected  by an adversary. After $T$ rounds, a UCB-style BO algorithm with an  $S(t): \mathbb{Z}^+ \to \mathcal{G}$ decomposition rule, incurs with a probability of at least $1 - \delta$ the following total cumulative regret $R_T$:
\begin{align*}
    R_T =   \mathcal{O} \left ( \sqrt{T\gamma_T} \left( B  + \sqrt{\ln\frac{1}{\delta} + \gamma_T }   + \sum_{t=1}^T \epsilon_t \right)\right ) ,
\end{align*}
where $B = \max_{t \in T} \lVert \hat{f}_t \rVert_{t}$ and $\lVert \cdot \rVert_{t}$ denotes the norm in $\mathcal{H}_{t}$.
\end{restatable}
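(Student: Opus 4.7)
The plan is to adapt the self-normalised RKHS confidence analysis of GP-UCB (Srinivas et al.; Chowdhury--Gopalan; and Bogunovic et al.\ for the corrupted case) to the two non-standard features of our setting: (i) the kernel $k_t$ varies with $t$ according to $S(t)$, and (ii) the true $f$ is not in $\mathcal{H}_t$ but only $\epsilon_t$-close to its best approximation $\hat f_t \in \mathcal{H}_t$ in sup-norm. The whole argument follows the classical ``confidence bound + instantaneous-regret lemma + information-gain telescoping'' template, modified so that the mismatch propagates additively while the UCB geometry is preserved.

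First, I would establish a confidence bound for $\hat f_t$. The key observation is that every past observation rewrites as $y_s = \hat f_t(\bm x_s) + (f(\bm x_s)-\hat f_t(\bm x_s)) + n_s$, i.e.\ a noisy query of $\hat f_t$ corrupted by a deterministic bias of magnitude at most $\epsilon_t$. A martingale / self-normalised argument in the style of Abbasi-Yadkori, combined with a posterior-mean perturbation lemma à la Bogunovic et al.\ that absorbs the bounded bias into the confidence width, then yields, uniformly in $\bm x$ and $t\le T$ with probability $\ge 1-\delta$,
\[
|\mu_{t-1}(\bm x) - \hat f_t(\bm x)| \;\le\; \beta_t \, \sigma_{t-1}(\bm x),
\qquad
\beta_t \;=\; \mathcal{O}\bigl(B + \sqrt{\gamma_T + \ln(1/\delta)} + \textstyle\sum_{s\le t} \epsilon_s\bigr).
\]
A triangle inequality with $\|\hat f_t - f\|_\infty\le \epsilon_t$ converts this into a confidence bound on the true target, $|\mu_{t-1}(\bm x)-f(\bm x)|\le \beta_t\sigma_{t-1}(\bm x)+\epsilon_t$.

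Next, the usual UCB calculation applies: since $\bm x_t$ maximises $\mu_{t-1}+\beta_t \sigma_{t-1}$, the two-sided confidence bound gives the standard instantaneous-regret inequality $f(\bm x^\star)-f(\bm x_t)\le 2\beta_t \sigma_{t-1}(\bm x_t) + 2\epsilon_t$. Summing over $t$, applying Cauchy--Schwarz, and invoking the information-theoretic inequality $\sum_{t=1}^T \sigma_{t-1}^2(\bm x_t)\le C_1 \gamma_T$ --- where $\gamma_T$ is the \emph{generalised} maximum information gain defined just before the theorem to accommodate the time-varying kernel via $\hat f_t\sim\mathcal{GP}(0,k_t)$ --- gives
\[
R_T \;\le\; 2\sqrt{C_1\, T \gamma_T}\,\beta_T \;+\; 2\textstyle\sum_{t=1}^T \epsilon_t
\;=\; \mathcal{O}\!\Bigl(\sqrt{T\gamma_T}\bigl(B + \sqrt{\gamma_T + \ln(1/\delta)} + \textstyle\sum_t \epsilon_t\bigr)\Bigr),
\]
the additive $\sum_t \epsilon_t$ being absorbed into the leading multiplicative term (valid as soon as $T\gamma_T\ge 1$) and the $\sum_t \epsilon_t$ coming from $\beta_T$ being exposed as the stated summand.

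The main technical obstacle is the first step: justifying a self-normalised concentration inequality when (a) the kernel $k_t$ forming the posterior changes at every round, and (b) the ``observations'' of $\hat f_t$ are corrupted by a non-stochastic, round-dependent bias. I would need to verify that the supermartingale construction behind the Abbasi-Yadkori inequality still goes through when $k_t$ is predictable with respect to the filtration generated by $\mathcal{D}_{t-1}$, and that the deterministic mismatch bias can be handled either by inflating $\beta_t$ by an $\mathcal{O}(\sum_s \epsilon_s)$ term or through a posterior-mean perturbation lemma. A secondary but non-trivial point is carrying the classical telescoping bound $\sum_t \sigma_{t-1}^2(\bm x_t)\le C_1 \gamma_T$ over to the time-varying kernel setting, which is precisely what the paper's generalised definition of $\gamma_T$ is crafted to enable.
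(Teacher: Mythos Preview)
Your proposal is correct and follows essentially the same route as the paper: the paper likewise introduces the hypothetical ``clean'' posterior mean $\mu^*_{t-1}$ built from observations of $\hat f_t$, bounds $|\hat f_t - \mu^*_{t-1}|$ by the Chowdhury--Gopalan self-normalised inequality (adapted to the time-varying kernel) and $|\mu^*_{t-1}-\mu_{t-1}|$ by Bogunovic et al.'s posterior-mean perturbation lemma, then runs the standard UCB-plus-Cauchy--Schwarz-plus-information-gain argument. One small slip worth fixing: at round $t$ every past observation $y_s$ has bias $|f(\bm x_s)-\hat f_t(\bm x_s)|\le \epsilon_t$ (the \emph{current} mismatch, not $\epsilon_s$), so the perturbation lemma contributes $\epsilon_t\sqrt{t}/\sigma_n$ to $\beta_t$ rather than $\sum_{s\le t}\epsilon_s$ --- this does not alter the final $\mathcal{O}$-bound after Cauchy--Schwarz, but it is the form the paper actually derives and uses.
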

\begin{proof}(Sketch)
We defer complete proof to Appendix \ref{ap:proof_regretbound}. Here, we provide a sketch of the main steps. BO under misspecification has been studied by \cite{bogunovic2021misspecified}. In their setting, the authors assume fixed kernels in-between iterations. Hence, we need to adapt the proof from \cite{bogunovic2021misspecified} to our setting where the kernel and the mismatch vary. To do so, we observe that at any given time, the difference between $\max_{\bm{x} \in \mathcal{X}}\hat{f}_t(\bm{x})$ and $\max_{\bm{x} \in \mathcal{X}}f(\bm{x})$ can be at most $\epsilon_t$. Consequently, the difference between $\max_{\bm{x} \in \mathcal{X}}\hat{f}_t(\bm{x})$ and $\hat{f}_t(\bm{x}_t)$ becomes the new term that we bound. Here, we adapt a high probability bound to the case of a changing kernel. The final step of the proof is to express the bound in terms of maximum information gain $\gamma_T$, which easily follows from preceding BO literature  \cite{srinivas2009gaussian}.
\end{proof}

Note that $\epsilon_t$ is a random quantity. Thus to make this bound easier to analyse, we derive the following Corollary. We provide its proof in Appendix \ref{ap:proof_corrregret}.
\begin{restatable}[]{corollary}{finalregretbound}
\label{Corr:Regret}
    Under the assumptions of Theorem \ref{th:regretbound}, we have with probability at least $1 - \delta_A - \delta_B$, the cumulative regret $R_T$ of a UCB-style BO algorithm utilising the decomposition suggesting scheme $S(t)$, incurs the following cumulative regret:
\begin{align*}
    R_T =   \mathcal{O} \Bigg ( \sqrt{T\gamma_T} \Bigg(
   B  &+ \sqrt{\ln\frac{1}{\delta_A} + \gamma_T } \\
   &+ \frac{\mathbb{E}_S \left[\sum_{t=1}^T  \epsilon_t \right] }{\delta_B} \left)\rule{0cm}{1cm}\right ) .
\end{align*}
\end{restatable}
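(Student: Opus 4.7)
The plan is to derive Corollary~\ref{Corr:Regret} directly from Theorem~\ref{th:regretbound} by replacing the random sum $\sum_{t=1}^T \epsilon_t$ with its expectation at the cost of an additional failure probability. Since $\epsilon_t = \lVert \hat{f}_t - f\rVert_{\infty} \geq 0$ for every $t$, the quantity $\sum_{t=1}^T \epsilon_t$ is a non-negative random variable (whose only source of randomness is the decomposition sampler $S$, as $f$ and the class $\mathcal{G}$ are fixed by the adversary). This opens the door to a Markov-inequality argument.

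First, I would invoke Theorem~\ref{th:regretbound} with the confidence parameter set to $\delta_A$. Call $E_A$ the event, of probability at least $1-\delta_A$, on which the regret bound
\begin{equation*}
R_T = \mathcal{O}\!\left(\sqrt{T\gamma_T}\Bigl(B + \sqrt{\ln(1/\delta_A)+\gamma_T} + \textstyle\sum_{t=1}^T\epsilon_t\Bigr)\right)
\end{equation*}
holds. This event depends on the observation noise and the UCB decisions, but the bound itself still contains the random quantity $\sum_{t=1}^T \epsilon_t$.

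Second, I would apply Markov's inequality (over the randomness of $S$) to this non-negative sum. Writing $Z := \sum_{t=1}^T \epsilon_t$, Markov yields
\begin{equation*}
\Pr\!\left[ Z \;\geq\; \frac{\mathbb{E}_S[Z]}{\delta_B} \right] \;\leq\; \delta_B,
\end{equation*}
so on the complementary event $E_B$, of probability at least $1-\delta_B$, we have $\sum_{t=1}^T\epsilon_t \leq \mathbb{E}_S\bigl[\sum_{t=1}^T \epsilon_t\bigr]/\delta_B$.

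Third, a union bound gives $\Pr[E_A \cap E_B] \geq 1 - \delta_A - \delta_B$, and on this intersection I can simply substitute the Markov upper bound into the $\sum_{t=1}^T \epsilon_t$ term of the Theorem~\ref{th:regretbound} conclusion to recover the Corollary exactly as stated.

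The only subtle point — and hence the main obstacle — is the bookkeeping of independence between the two sources of randomness: the event $E_A$ carries randomness from the observation noise and from $S$ (via $\gamma_T$ and the sequence of kernels), while the Markov step is over $S$ alone. However, the union bound requires no independence; it only needs each event to have its stated probability marginally, which both do. One should also briefly verify that the constants absorbed into the $\mathcal{O}(\cdot)$ in Theorem~\ref{th:regretbound} do not depend on the value of $\sum_t \epsilon_t$, so that the substitution inside the big-O is valid — this is immediate from the additive structure of the bound.
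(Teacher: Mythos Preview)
Your proposal is correct and follows exactly the paper's own argument: apply Markov's inequality to the non-negative random sum $\sum_{t=1}^T \epsilon_t$ to obtain $\sum_{t=1}^T \epsilon_t \le \mathbb{E}_S[\sum_{t=1}^T \epsilon_t]/\delta_B$ with probability at least $1-\delta_B$, then combine with Theorem~\ref{th:regretbound} (instantiated at $\delta_A$) via a union bound. Your additional remarks on the union bound not requiring independence and on the additive structure of the $\mathcal{O}(\cdot)$ bound are valid and more careful than the paper's terse one-line proof.
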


\subsection{Analysing Decomposition Rules}\label{Sec:AnalyseDecom}
From Corollary \ref{Corr:Regret}, we notice that we need a decomposition rule ${S}(t)$ such that both $\gamma_T$ and $\mathbb{E} \left[\sum_{t=1}^T \epsilon_t\right] $ are ``small''. To bound  $\gamma_T$, we require a restriction on the class of decompositions our scheme can propose. If we choose this class to consist only of decompositions with pairwise components, we get the following result on $\gamma_{T}$, proven in Appendix \ref{ap:proof_informationgain}. 
 \begin{restatable}[]{proposition}{informationgain} \label{prop:informationgain}
  The maximum information gain for a squared exponential kernel following a decomposition only with pair-wise components in $d$ dimensions is upper-bounded by $
     \gamma_T \le \mathcal{O}(d(\log T )^ 3)
$. 
 \end{restatable}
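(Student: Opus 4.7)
The plan is to reduce the claim to the classical information-gain bound of \citet{srinivas2009gaussian} for the squared exponential (SE) kernel in low dimensions and then exploit the additive structure of the kernel. I would carry this out in three steps.

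First, I would establish that the maximum information gain of an additive kernel is bounded by the sum of the maximum information gains of its components. Writing $k^g = \sum_{c \in g} k_c$, we can view the latent function as $\hat{f} = \sum_c f_c$ with independent $f_c \sim \mathcal{GP}(0, k_c)$, so the Gram matrix $\bm{K}_T$ decomposes as $\sum_{c} \bm{K}_{T,c}$. Using the fact that observing $\bm{y}_T = \sum_c \bm{f}_{T,c} + \bm{\varepsilon}$ is less informative than observing all per-component values separately, together with the chain rule for mutual information, one obtains
\[
\gamma_T(k^g) \;\le\; \sum_{c \in g} \gamma_T(k_c).
\]
This step is essentially the one carried out in the Add-GP-UCB analysis of \citet{kandasamy2015high}, to which I would refer.

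Second, I would invoke the Srinivas--Krause--Kakade--Seeger bound that for an SE kernel in $D$ dimensions $\gamma_T \le \mathcal{O}((\log T)^{D+1})$. Since every component $c$ acts on exactly $|c|=2$ input dimensions, this specialisation immediately gives $\gamma_T(k_c) \le \mathcal{O}((\log T)^{3})$ for each $c \in g$.

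Third, I would bound the number of pair-wise components. In the tree-structured class of decompositions considered in this paper, there are at most $d-1$ components on $d$ dimensions. Combining the three steps,
\[
\gamma_T(k^g) \;\le\; (d-1)\cdot\mathcal{O}\bigl((\log T)^{3}\bigr) \;=\; \mathcal{O}\bigl(d(\log T)^{3}\bigr),
\]
which is the stated bound.

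The main obstacle will be the additivity lemma in step one: the inequality $\gamma_T(k^g) \le \sum_{c} \gamma_T(k_c)$ is delicate because we never observe the latent components $f_c(\bm{x})$ separately, only their sum corrupted by noise, so a naive application of additivity of mutual information does not go through. The cleanest fix is to introduce the ``oracle'' observations $\bm{y}^{(c)}_T = \bm{f}_{T,c} + \bm{\varepsilon}^{(c)}$ with independent noise, note that $I(\bm{y}_T;\hat{\bm{f}}_T)$ is upper-bounded by $I(\{\bm{y}^{(c)}_T\}_c;\{\bm{f}_{T,c}\}_c) = \sum_c I(\bm{y}^{(c)}_T;\bm{f}_{T,c})$ by independence, and then take a supremum over $\bm{X}$ componentwise. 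Once this lemma is in place, steps two and three are immediate plug-ins, so the proof reduces to getting the subadditivity argument tight.
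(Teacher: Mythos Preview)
Your approach mirrors the paper's in spirit—subadditivity of information gain over the components combined with the per-component Srinivas et al.\ bound for a two-dimensional SE kernel. The paper, however, does not spell these steps out; it simply invokes a result from \citet{rolland2018high} stating that for an additive SE kernel in an $A$-dimensional space with each subkernel acting on at most $B$ dimensions one has $\gamma_T \le A B^{B}(\log T)^{B+1}$, and plugs in $A=d$, $B=2$.

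There is a genuine gap in your step three. You bound the number of components by $d-1$ by appealing to tree structure, but the proposition does not assume a tree decomposition—trees enter only in the next result (Proposition~\ref{prop:lowerinformation}), and the text immediately following the present proposition explicitly contemplates ``a decomposition with all possible pairwise components'', i.e.\ $\binom{d}{2}$ terms. With your subadditivity lemma that case would yield $\mathcal{O}\bigl(d^{2}(\log T)^{3}\bigr)$, not $\mathcal{O}\bigl(d(\log T)^{3}\bigr)$. The paper avoids this by quoting Rolland's bound, in which the prefactor is the ambient dimension $A=d$ rather than the number of components. As written, your argument only establishes the claim for tree (or more generally $\mathcal{O}(d)$-component) decompositions, not for the general pairwise setting the proposition addresses.
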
 
 In light of the result above, it might be tempting to try a decomposition with all possible pairwise components. However, given the empirical success of the Tree algorithm \cite{han2021high}, it appears that for many problems, it is sufficient to consider tree-structured decompositions, which we now proceed to define.
\begin{definition}
    \textit{A decomposition $g$ is  tree-based with $E$ edges if it has $E$ pair-wise components and the remaining components contain one dimension. Additionally, an undirected graph formed by edges corresponding to pair-wise interactions does not contain a cycle.}
\end{definition}

  If we restrict ourselves to the class of trees, we additionally reduce the maximum information gain, as stated by the next result, proven in Appendix \ref{ap:proof_lowerinformation}.
  \begin{restatable}[]{proposition}{lowerinformation} \label{prop:lowerinformation}
  When $d > 2$, the maximum information gain for a kernel following a tree decomposition (possibly changing between timesteps) is always smaller than for a kernel containing all pairwise interactions. 
 \end{restatable}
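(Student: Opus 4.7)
The plan is to compare the Gram matrices of the tree and the full pairwise kernel in the positive semi-definite order and then invoke the monotonicity of $\log\det(I + \sigma_n^{-2}\,\cdot)$ on positive semi-definite matrices. Concretely, for any point set $\bm{X} \subseteq \mathcal{X}$, if $K_{\text{full}}(\bm{X}) - K_{\text{tree}}(\bm{X}) \succeq 0$, then $\det(I + \sigma_n^{-2} K_{\text{full}}) \geq \det(I + \sigma_n^{-2} K_{\text{tree}})$, and taking the maximum over $\bm{X}$ yields $\gamma_T(k_{\text{full}}) \geq \gamma_T(k_{\text{tree}})$ via the information-gain identity $I(\bm{f};\bm{y}) = \tfrac{1}{2}\log\det(I + \sigma_n^{-2} K)$ that underlies the definition of $\gamma_T$.

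First I would treat the case of a \emph{spanning} tree, so every dimension lies in at least one edge and there are no singleton components; then $k_{\text{tree}} = \sum_{e \in E_{\text{tree}}} k_e$ with $|E_{\text{tree}}| \leq d-1$. Because $E_{\text{tree}} \subseteq \binom{[d]}{2}$, the residual $k_{\text{full}} - k_{\text{tree}} = \sum_{e \in \binom{[d]}{2} \setminus E_{\text{tree}}} k_e$ is a sum of positive semi-definite pair-wise squared-exponential kernels and is itself positive semi-definite, which gives the required ordering on every point set. Strictness for $d > 2$ follows because $\binom{d}{2} > d-1$, so at least one genuine pair kernel is added and one can exhibit a point set on which the determinant strictly increases. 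The changing-kernel scenario reduces to the same argument applied to the joint covariance over the full time horizon: the entry $K_{ij}$ associated with the tree sequence equals $\sum_{c \in g_i \cap g_j} k_c(\bm{x}_i, \bm{x}_j)$, and the difference with the fixed full pairwise kernel matrix can again be written as a sum of PSD pair-wise kernel matrices.

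The remaining case is tree decompositions containing singleton components, where the naive difference $k_{\text{full}} - k_{\text{tree}}$ may fail to be PSD because the $1$-dimensional SE components $k_i$ in $k_{\text{tree}}$ do not appear in $k_{\text{full}}$. I would handle this by combining Proposition~\ref{prop:informationgain} with the standard sub-additivity of information gain $\gamma_T(k_A + k_B) \leq \gamma_T(k_A) + \gamma_T(k_B)$: the tree's info gain is then bounded by $(d-1)\,\mathcal{O}((\log T)^3) + d\,\mathcal{O}((\log T)^2)$, while the full pairwise kernel's info gain can be shown to scale as $\Omega(d^{2}(\log T)^{3})$ by an explicit construction of a well-separated point set. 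The gap $\binom{d}{2} - (d-1) = (d-1)(d-2)/2$ is strictly positive for $d > 2$ and drives the rate separation. The hardest part is precisely this singleton case: cleanly controlling the $1$-dimensional components and verifying that their lower-order $(\log T)^2$ contribution cannot close the $(\log T)^3$ gap coming from the $\binom{d}{2} - (d-1)$ missing pairs, especially once the tree decomposition is allowed to vary across timesteps.
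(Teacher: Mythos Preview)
Your core strategy---show $K_{\text{full}}-K_{\text{tree}}\succeq 0$ and invoke the monotonicity of $\log\det(I+\sigma_n^{-2}\,\cdot)$---is exactly what the paper does, and your treatment of the spanning-tree case and the time-varying joint covariance mirrors the paper's argument line for line.

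The genuine gap is in your singleton case. The paper does \emph{not} switch to a rate comparison there; instead it defines the ``full'' kernel as the union $\bigcup_{g\in\mathcal{G}}$ of all components appearing in any decomposition of the class. When trees carry singleton components, those singletons are therefore also in the full kernel, so every tree's component set is a subset of the full one and the PSD difference argument goes through uniformly---no separate analysis is needed. Your alternative route via sub-additivity and a claimed $\Omega(d^2(\log T)^3)$ lower bound for $\gamma_T(k_{\text{full}})$ cannot deliver the proposition as stated: (i) Proposition~\ref{prop:informationgain} and the Rolland et~al.\ bound you invoke are \emph{upper} bounds, so the $\Omega$-claim is an extra result you would have to prove from scratch; and (ii) even granting it, a rate separation only yields the inequality for $T$ large enough, whereas the proposition asserts the comparison holds \emph{always}. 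The simple fix is to absorb the singletons into $k_{\text{full}}$ (as the paper does implicitly) and drop the rate argument entirely.

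One further remark that applies to both your write-up and the paper's: in the time-varying setting the assertion that $\dot{K}_T=K_T^{\text{full}}-K_T$ is a sum of PSD matrices is more delicate than it appears. The contribution of a single component $c$ to $\dot{K}_T$ has $(i,j)$-entry $k_c(\bm{x}_i,\bm{x}_j)\,\mathbf{1}[c\notin g_i\cap g_j]$, and zeroing entries of a PSD matrix according to such an indicator does not generally preserve positive semi-definiteness. The paper states this step as obvious and you echo it; if you intend to make the argument fully rigorous, this is the place that needs additional care.
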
 
{\textbf{Bounding Expected Mismatches $\mathbb{E} \left[\sum_{t=1}^T \epsilon_t\right] $:}} Bounding $\mathbb{E} \left[\sum_{t=1}^T \epsilon_t\right] $ is far more challenging. To understand this difficulty, let us consider two extremes when suggesting decompositions: constant and adaptive decomposition selection rules $S(t)$. If $S(t)$ constantly suggests the same decomposition, the adversary efficiently exploits this strategy resulting in a constant high mismatch. While if we adaptively increase the kernel's complexity using procedures like those in \cite{Berkenkamp}, our method will indeed reduce mismatch regardless of the adversary's choices. However, as we introduce more interactions between dimensions, our decompositions eventually stop being trees and thus lead to increases in information gain $\gamma_T$. 


Analysing the bound above, we observe a trade-off between maximal information gain and the risk of being exploited by an adversary. To resolve this problem, we propose to use a randomised tree sampling strategy that keeps the information gain small while circumventing adversaries. Importantly, we show that such an $S(t)$ exhibits the lowest expected mismatch within a class of tree-structured decompositions. We now state the formal result.

\begin{restatable}[]{theorem}{optimalscheme} \label{th:optimalscheme}
Let $\mathcal{G}$ be a class of $d$-dimensional tree-based decompositions with $E$ edges. Let the adversary choose any function $f$ from RKHS $\mathcal{H}_{g}$ defined by kernel $k_g$ on a decomposition $g \in \mathcal{G}$ also chosen by the adversary, such that the infinity norm of each component function is bounded $\forall_{c \in g}|f_c|_{\infty} \le M_c$ and $\sum_{c \in g} M_c \le M$. We then have that for any data-independent scheme $S$, the expected sum of mismatches is at least as large as for the scheme $S_{\text{r}}$ that selects a decomposition from $\mathcal{G}$ uniformly at random, i.e.,
\begin{equation*}
    \bigforall_{S : \mathbb{Z}^+ \to \mathcal{G}} \quad \mathbb{E}_{S} \left[\sum_{t=1}^T \epsilon_t \right] \ge \mathbb{E}_{S_\text{r}} \left[\sum_{t=1}^T \epsilon_t \right] .
\end{equation*}

\end{restatable}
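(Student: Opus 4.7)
The plan is to cast the claim as a minimax statement: for any data-independent scheme $S$, the worst-case adversary achieves expected cumulative mismatch at least as large as the worst-case adversary against the uniform sampler $S_{\text{r}}$. Since $S$ is data-independent and the adversary commits to $(g,f)$ before the decomposition draws are observed, I first collapse the temporal sum into a single time-averaged marginal. Letting $p_t(g')$ be the probability that $S$ picks $g'$ at round $t$ and $\bar{p} = \tfrac{1}{T}\sum_{t=1}^T p_t$, linearity of expectation yields
\[
\mathbb{E}_S\!\left[\sum_{t=1}^T \epsilon_t\right] = T\,\mathbb{E}_{g'\sim\bar{p}}\bigl[\epsilon(g', g, f)\bigr], \quad \epsilon(g',g,f) := \min_{f'\in\mathcal{H}_{g'}}|f-f'|_\infty .
\]
The claim therefore reduces to showing that for every distribution $\bar{p}$ on $\mathcal{G}$, $\max_{(g,f)}\mathbb{E}_{g'\sim\bar{p}}\bigl[\epsilon(g',g,f)\bigr]\ge\max_{(g,f)}\mathbb{E}_{g'\sim U(\mathcal{G})}\bigl[\epsilon(g',g,f)\bigr]$, where $U(\mathcal{G})$ denotes the uniform distribution on $\mathcal{G}$.

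\textbf{Constructing the adversary.} Every decomposition in $\mathcal{G}$ has exactly $E$ pairwise components, so $\sum_{e\in\binom{[d]}{2}} \bar{p}(e\in g') = E$; averaging across the $\binom{d}{2}$ possible edges, there exists some $e^* = \{i,j\}$ with $\bar{p}(e^*\in g') \le E/\binom{d}{2}$. The adversary then chooses any $g^*\in\mathcal{G}$ that contains $e^*$ and concentrates the entire budget on the single component $e^*$, setting $f = f_{e^*}$ with $|f_{e^*}|_\infty = M$ and $M_c = 0$ for every other $c$. If $e^* \in g'$ the mismatch is $0$; if $e^*\notin g'$, no single component of the tree-based $g'$ contains both $i$ and $j$ (components have size at most two), so every element of $\mathcal{H}_{g'}$, viewed as a function of $(x_i,x_j)$ with the remaining coordinates frozen, is additively separable in those two variables. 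A maximally non-additive choice of $f_{e^*}(x_i,x_j)$ therefore forces $\epsilon(g',g^*,f) \ge \kappa\,|f_{e^*}|_\infty = \kappa M$ for a universal constant $\kappa>0$, which gives
\[
\mathbb{E}_{g'\sim\bar{p}}\bigl[\epsilon(g',g^*,f)\bigr] \ \ge\ \kappa\,M\bigl(1 - \bar{p}(e^*\in g')\bigr)\ \ge\ \kappa\,M\bigl(1 - E/\tbinom{d}{2}\bigr).
\]

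\textbf{Matching against the uniform sampler.} The symmetric group $S_d$ acts transitively on unordered pairs in $[d]$ and preserves the class $\mathcal{G}$, hence $U(\mathcal{G})$ assigns the same inclusion probability to every edge, which by the same averaging identity must equal $E/\binom{d}{2}$. The same concentrated adversary strategy therefore achieves exactly $\kappa M(1 - E/\binom{d}{2})$ against $S_{\text{r}}$, and a short linearity check shows that redistributing $f$'s mass across several components cannot improve this number under $U(\mathcal{G})$ (each edge contributes the same per-unit-mass value by symmetry). Combining with the previous paragraph yields $\max_{(g,f)}\mathbb{E}_{\bar{p}}[\epsilon]\ge\max_{(g,f)}\mathbb{E}_{U(\mathcal{G})}[\epsilon]$; multiplying by $T$ gives the stated inequality.

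\textbf{Main obstacle.} The only technical step beyond the combinatorics is the ``separability gap'' used in the second paragraph: once $e^*\notin g'$ has forced additive separability of $\mathcal{H}_{g'}$ in the coordinates $(x_i,x_j)$, one must lower-bound the infinity-norm mismatch by $\kappa |f_{e^*}|_\infty$ for a universal $\kappa>0$. This amounts to exhibiting (or approximating by density) an SE-RKHS element in two variables whose best additive approximation leaves an infinity-norm residual proportional to its own infinity norm — a standard but delicate construction. The remaining ingredients — the pigeonhole step, the transitivity of $S_d$ on edges, and the collapse to a single time-averaged distribution $\bar{p}$ — are essentially routine once this quantitative additive-separability gap is in place.
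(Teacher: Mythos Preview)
Your high-level strategy coincides with the paper's: concentrate the adversary's entire budget $M$ on a single edge $e^*$, use averaging over the $\binom{d}{2}$ edges to find one that any scheme under-selects, and invoke the edge-transitivity of $U(\mathcal{G})$ to identify uniform as the minimax optimum. The paper first proves the upper bound $\epsilon(g',g,f)\le\sum_{c\in g\setminus g'}M_c$ (your missing lemma), asserts the adversary can saturate it, and then the problem collapses to the purely combinatorial $\min_S\max_c\mathbb{E}_S[N_{\neg c}]$.

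Where your argument does not close is in the role of $\kappa$. To obtain the minimax inequality you must match a \emph{lower} bound on $\max_{(g,f)}\mathbb{E}_{\bar p}[\epsilon]$ with an \emph{upper} bound on $\max_{(g,f)}\mathbb{E}_{U}[\epsilon]$. Your concentrated adversary gives the former as $\kappa M(1-E/\tbinom{d}{2})$. For the latter, your ``linearity check'' is not linearity---$\epsilon$ is only \emph{sub}additive in the components, $\epsilon(g',f_1+f_2)\le\epsilon(g',f_1)+\epsilon(g',f_2)$---and combining this with symmetry yields $\max_{(g,f)}\mathbb{E}_{U}[\epsilon]\le\kappa^{\!*} M(1-E/\tbinom{d}{2})$ where $\kappa^{\!*}=\sup_{|h|_\infty\le 1}\nu(h)$ is the \emph{optimal} two-variable non-additivity ratio. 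If your constructed $f_{e^*}$ only realises some $\kappa<\kappa^{\!*}$, another adversary beats it against $U$ and the two bounds miss. Hence exhibiting \emph{some} $\kappa>0$, which you flag as the main obstacle, is not sufficient; you must either take $\kappa=\kappa^{\!*}$ in the lower bound (let the adversary use the extremal shape, possibly via an approximating sequence) or, as the paper does, argue that the adversary can make the mismatch equal $M$ whenever the edge is absent, i.e.\ work with $\kappa^{\!*}=1$ throughout.
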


 \begin{proof}(Sketch) We defer complete proof to Appendix \ref{ap:proof_optimalscheme}. Observe that the optimal strategy for an adversary is to put all the mismatch $M_c$ on the pair-wise component $c$ least selected by the scheme $S(t)$. Since we will suffer a mismatch of $M$ every time we do not select this component,  the optimal scheme needs to maximise the expected number of times the least selected component is chosen. This happens when all pair-wise components have the same probability of being selected, proving the optimality of $S_{\text{r}}$. 
 \end{proof}

\subsection{Practical Algorithm: Random Decompositions UCB}
Equipped with the above results, this section presents a practical and effective algorithm for high-dimensional BO. Our algorithm, titled random decomposition upper confidence bound (RDUCB), adheres to the theoretical results from the previous section and allows a scalable implementation. 
\begin{algorithm}[h!]
   \caption{RDUCB}
   \label{alg:rd_ucb}
\begin{algorithmic}[1]
   \STATE {\bfseries Inputs:} Black-box function $f$, evaluation budget $N$, initial budget $N_{\text{init}}$, exploration bonuses $\{\beta_t\}_{t=1}^{N}$
   \STATE Evaluate $N_{\text{init}}$ random inputs in $f$ \& populate  $\mathcal{D}_{N_{\text{init}}}$
   \FOR{$t=N_{\text{init}} + 1$ {\bfseries to} $N$}
        \STATE Sample tree decomposition $g$ (Alg. \ref{alg:samplingtrees})
        \STATE Fit a GP using $\mathcal{D}_{t-1}$ with the kernel $k_g(\cdot)$
        \STATE Maximise $\alpha^{(\text{add-UCB})}_t(\bm{x}|\mathcal{D}_{t-1})$ with message passing  
   \STATE Evaluate $f$ on the suggested query \& add to $\mathcal{D}_{t-1}$
   \ENDFOR
\end{algorithmic}
\end{algorithm}

Algorithm \ref{alg:rd_ucb} is a pseudo code of RDUCB. At a high-level, our method follows any generic BO solver in that it first fits a probabilistic model (line 5) and then maximises an acquisition, as shown in line 6. Of course, we maximise additive UCB acquisitions as dictated by the sampled decomposition $g$. As noted earlier, decompositions $g$ correspond to trees that we sample so that each edge has an equal probability of being selected. We present this sampler (referred to in line 4 of Algorithm \ref{alg:rd_ucb}) in Algorithm \ref{alg:samplingtrees} in Appendix \ref{ap:treesampler}.



\begin{figure*}[t!]
\begin{subfigure}[b]{0.33\textwidth}
  \centering
   \includegraphics[trim={0em 0em 5em 2em}, clip=true, width=\linewidth]{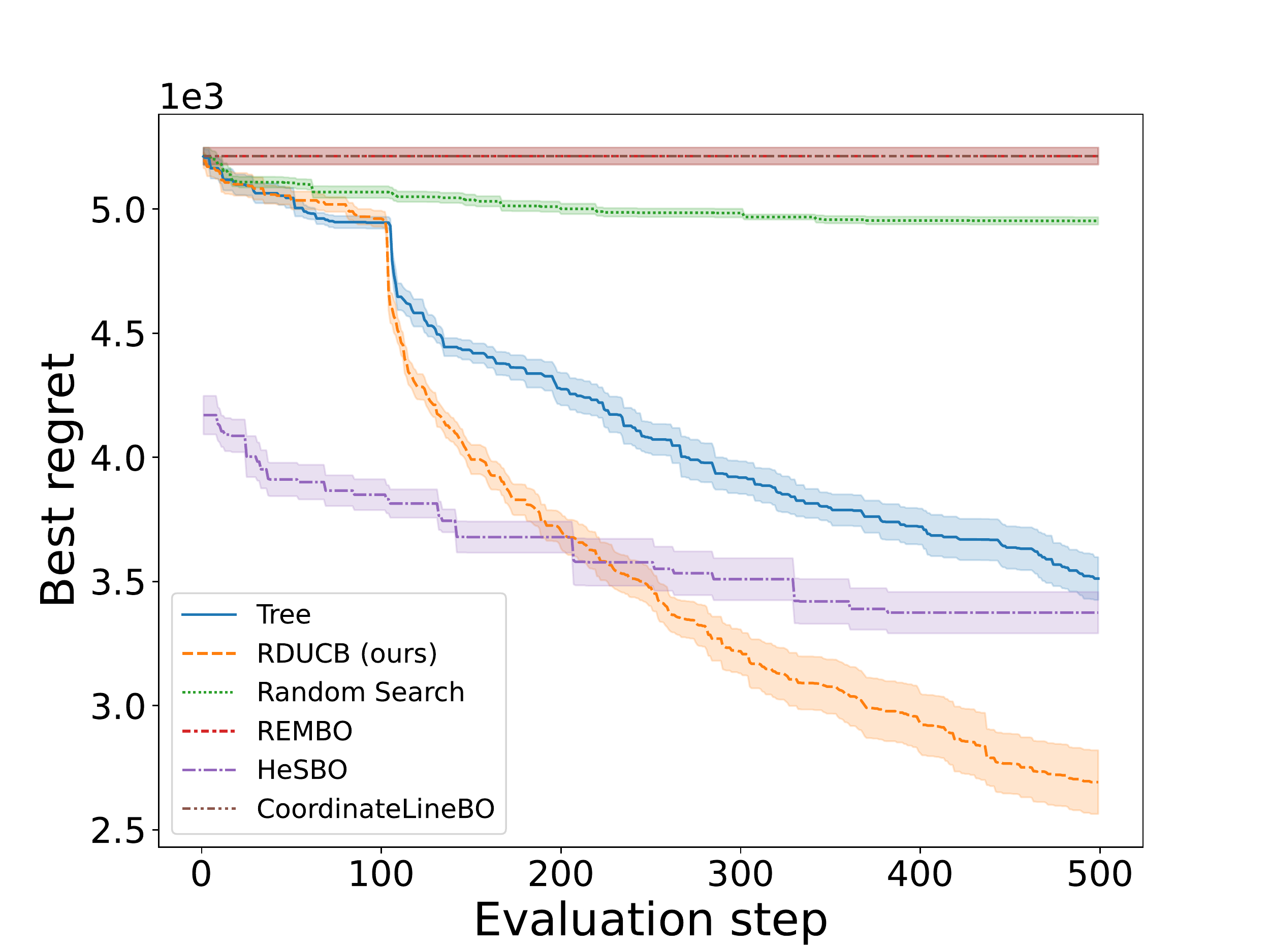}
 \caption{250-d Stybtang Function}
\end{subfigure}%
\begin{subfigure}[b]{0.33\textwidth}
  \centering
   \includegraphics[trim={0em 0em 5em 2em}, clip=true,width=\linewidth]{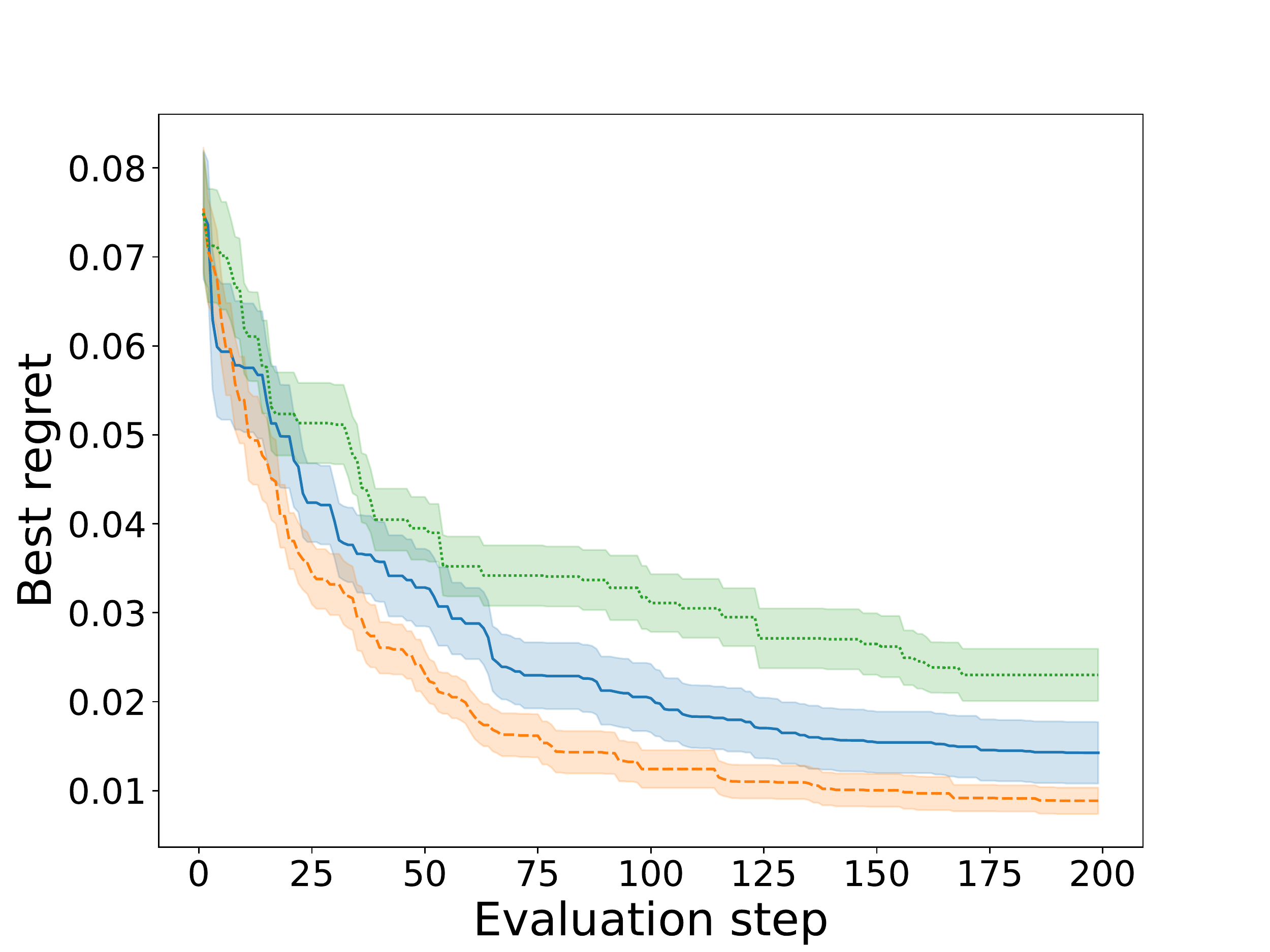}
 \caption{9-d NAS Protein Dataset}
\end{subfigure}%
    \begin{subfigure}[b]{0.33\textwidth}
  \centering
   \includegraphics[trim={0em 0em 5em 2em}, clip=true,width=\linewidth]{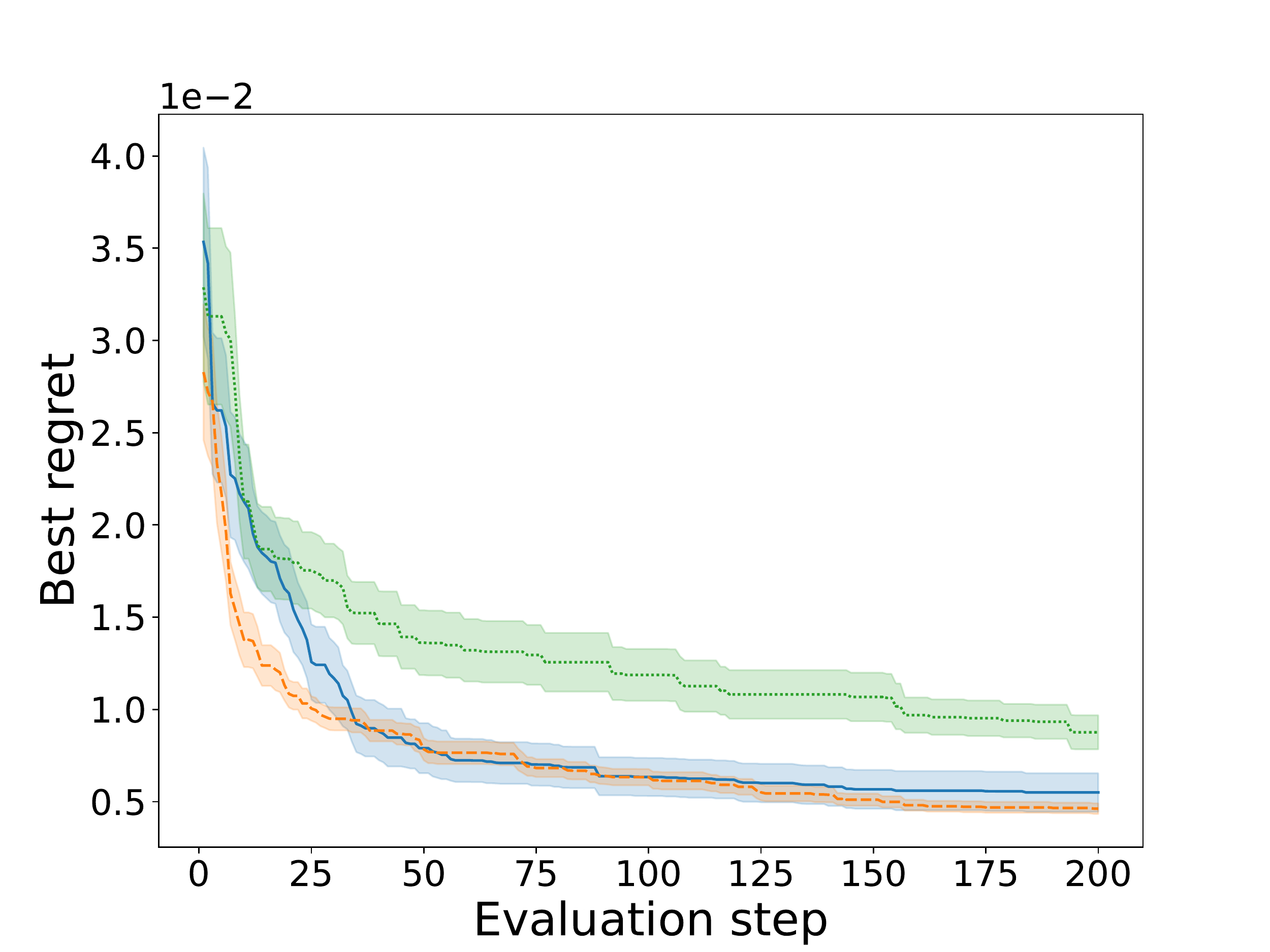}
 \caption{9-d NAS Parkinson Dataset}
\end{subfigure}%
\caption{Best regret results from the 250-dimensional synthetic functions (averaged over 10 random seeds) and the NAS benchmark (averaged over 20 random seeds). We note that REMBO is not reported for the NAS (Protein and Parkinson) datasets as those involve discrete variables. Tree refers to the previous state-of-the-art decomposition technique from \cite{han2021high}. 
We observe that RDUCB outperforms Tree, REMBO and random search. It is also clear that as the dimensions increase, so does the performance of our algorithm.}
 \label{fig:stybtang_protein_parkinson}
\begin{subfigure}[b]{0.33\textwidth}
  \centering
   \includegraphics[trim={0em 0em 5em 2em}, clip=true,width=\linewidth]{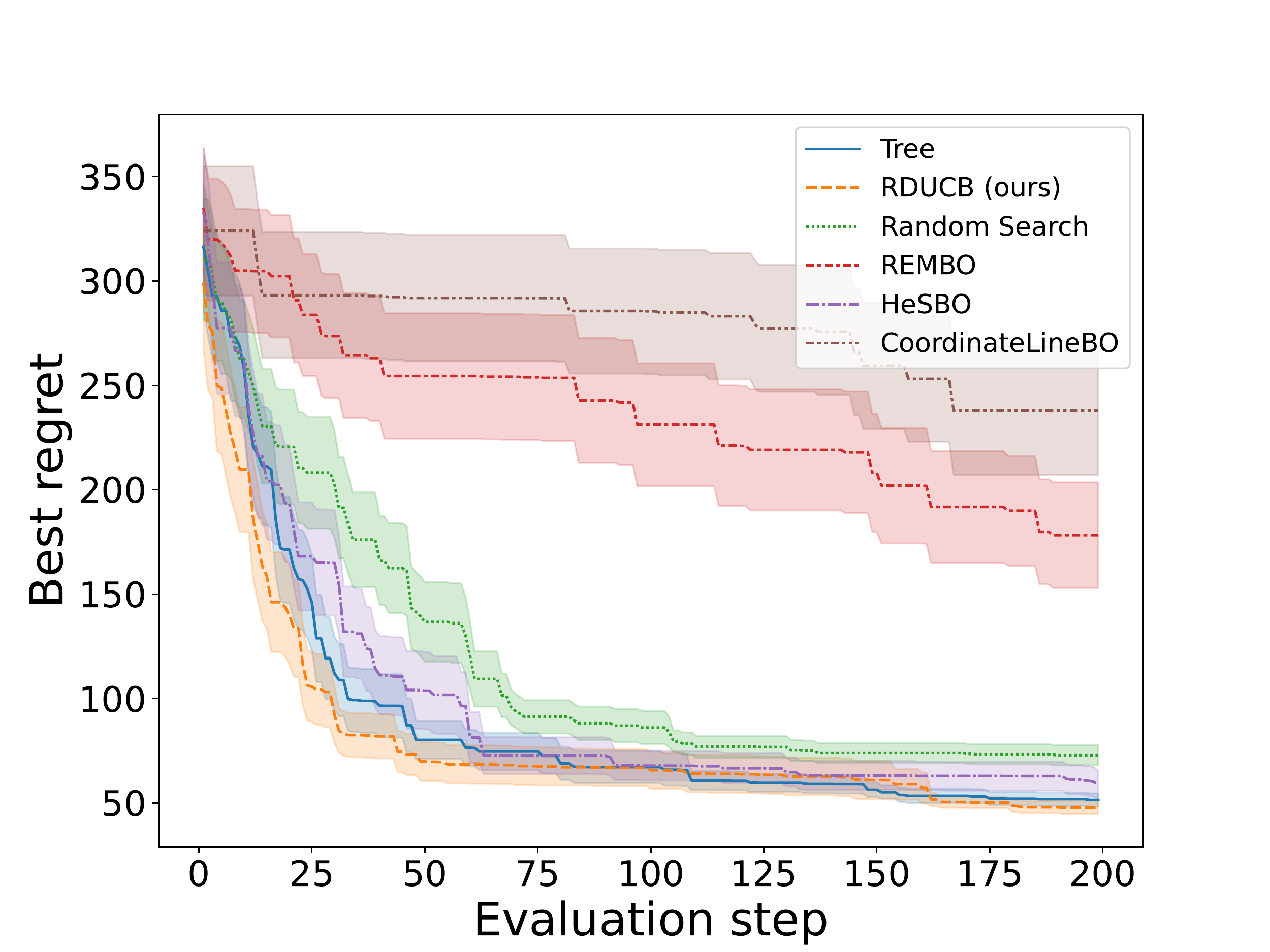}
 \caption{74-d $\texttt{qiu}$ MIP Task}
\end{subfigure}%
\begin{subfigure}[b]{0.33\textwidth}
  \centering
   \includegraphics[trim={0em 0em 5em 2em}, clip=true,width=\linewidth]{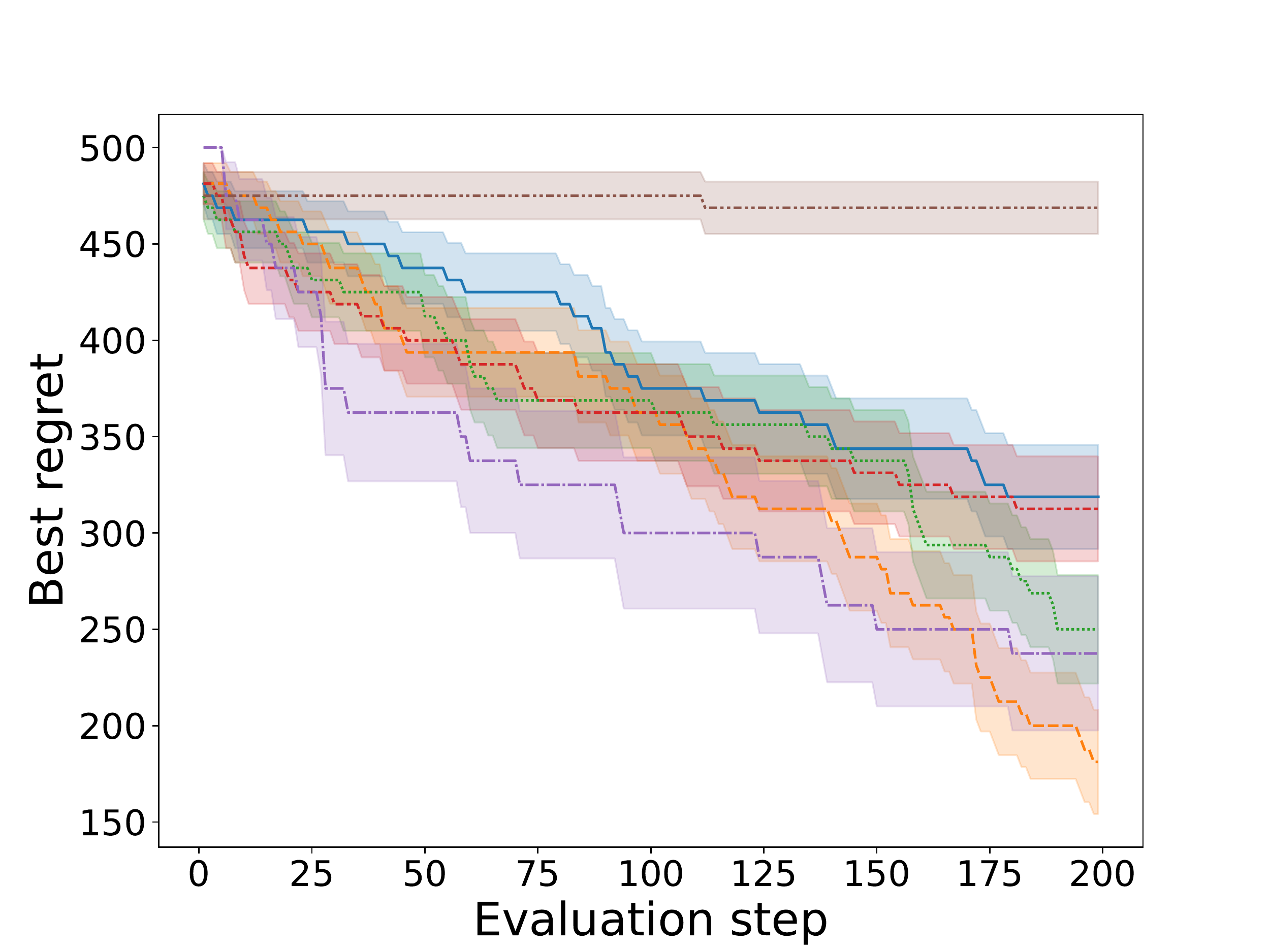}
 \caption{74-d $\texttt{misc05inf}$ MIP Task}
\end{subfigure}%
    \begin{subfigure}[b]{0.33\textwidth}
  \centering
   \includegraphics[trim={0em 0em 5em 2em}, clip=true,width=\linewidth]{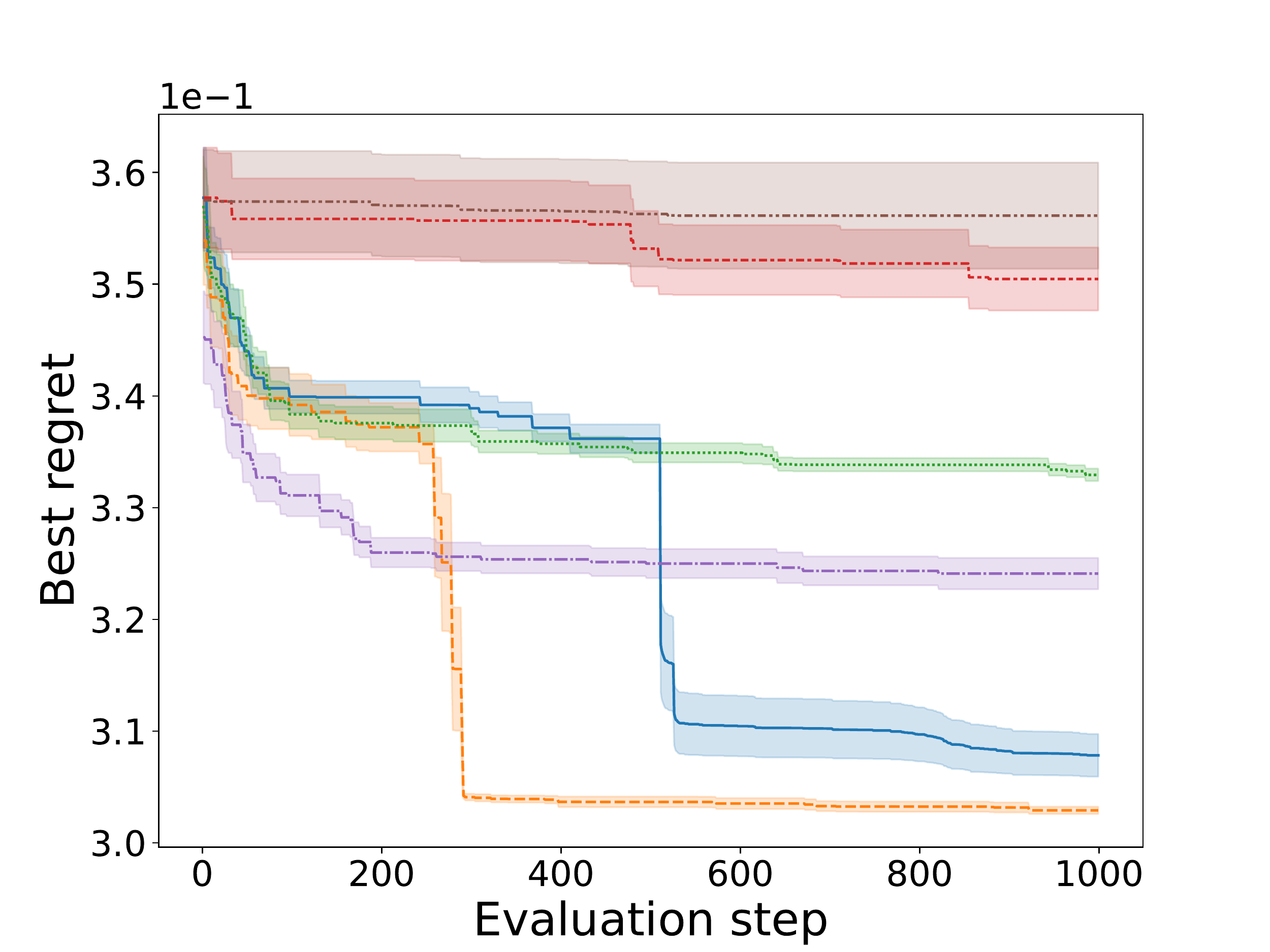}
 \caption{180-d DNA LassoBench Dataset}
\end{subfigure}%
\caption{Best regret results from tuning 74-d MIP problems and the 180-dimensional DNA dataset from the Lasso bench. We average the results over 40 seeds for $\texttt{qiu}$ and 80 seeds for  $\texttt{misc05inf}$. For the LassoBench task, we average over ten random seeds. Tree again refers to the previous state-of-the-art from \cite{han2021high}. We see that RDUCB outperforms others in best regret. Interestingly, the performance gap to other methods increases as the dimensions increase. }
 \label{fig:qiu_misc_dna}
\end{figure*}

\section{Empirical Evaluation}
This section presents experimental results on multiple benchmarks. We compare RDUCB against well-established techniques, including tree-based learners \cite{han2021high}, random embedding BO \cite{wang2013rembo}, Hashing-enhanced
Subspace Bayesian Optimization (HeSBO) \cite{nayebi2019framework}, Coordinate Line BO \cite{kirschner2019adaptive}  and random search\footnote{It is worth noting that REMBO, HeSBO and CoordinateLineBO are presented in all experiments except those involving neural architecture search (NAS). The reason is that NAS baselines operate in discrete search spaces for which these algorithms are not well-suited.}. The comparison against tree-learning-based BO allows us to gauge improvements compared to the prior state-of-the-art. In contrast, the comparison to REMBO, HeSBO and LineBO  sheds light on the advantages we gain when operating in the original  high-dimensional space. 

\newpage

In Appendix \ref{ap:algo_setting}, we provide all algorithm settings used in our experiments. We have open-sourced our code\footnote{\url{https://github.com/huawei-noah/HEBO/tree/master/RDUCB}} to ease the reproducibility of our results. 
\subsection{Benchmark Datasets}

\textbf{Synthethic Functions:} 
We test our method on 20-dimensional Rosebrock, 20-dimensional Hartmann, and 250-dimensional Styblinski-Tang (Stybtang) functions. We calculate the regret as the difference between queried function value and the theoretical minimum. 

\textbf{Neural Network Hyperparameter Tuning:}
In the second set of experiments, we consider the neural architecture search (NAS) hyperparameter tuning benchmark \cite{zela2020bench}. This benchmark consists of precomputed validation mean-squared errors for different combinations of hyperparameters in a two-layer feed-forward neural network trained on four different datasets. This experiment considers \emph{mixed} search spaces, e.g., learning and drop-out rates (continuous), the sizes of hidden layers and activation types (discrete).

\textbf{Mixed Integer Programming:} 
We consider the problem of tuning heuristic hyperparameters for the mixed integer programming (MIP) solver LPSolve \cite{berkelaar2015package}. This domain is high-dimensional with a 74-dimensional search space. We consider three MIP problems varying in difficulty with the $\texttt{misc05inf}$ instance being the hardest. For each problem, the regret value is the duality gap of the best solution found after 5 seconds with given hyperparameters. We cap the maximum instantaneous regret at 500, and running the solver once is considered as one query. 

\textbf{Weighted Lasso Tuning:} 
The last problem we consider is tuning the LassoBench \cite{vsehic2022lassobench}. LassoBench is a set of benchmark problems where BO tunes the weights for a weighted Lasso model. Hence, the number of dimensions of the search space scales with the number of features in the dataset. The highest dimension we study in this experiment is the 180 DNA data.

\subsection{Regret Results}
We run all algorithms varying the number of seeds per-each benchmark to allow for statistically significant results on the tasks introduced above. We report some of these results in this section (Figures \ref{fig:stybtang_protein_parkinson} and \ref{fig:qiu_misc_dna}) and defer others to Appendix \ref{App:Exps} due to space constraints. Every algorithm started with 10 initial randomly sampled points. Note that for HeSBO, the initial points were sampled in the algorithm's lower-dimensional space (hence the initial regret is sometimes different from other algorithms). 

In general, we notice that RDUCB outperforms other baselines. We see that our improvements are amplified in high dimensional settings, as reported in Figures \ref{fig:stybtang_protein_parkinson}(a) and Figures \ref{fig:qiu_misc_dna}(a),  \ref{fig:qiu_misc_dna}(b) and  \ref{fig:qiu_misc_dna}(c). In three experiments, RDUCB draws in terms of final regret with Tree, and in two benchmarks RDUCB (minorly) underperforms compared to Tree (see Appendix \ref{App:Exps}). We note that the cases when RDUCB draws or underperforms mostly correspond to low-dimensional settings. We expand on this in the next section.

\subsection{Scalability as Dimensionalities Increase}
We conduct an ablation study to test how our method scales with the number of dimensions and to understand its potential limitations. We choose one synthetic function and one real-world problem. For the synthetic function, we vary the dimensionality of the Stybtang function, while in the real-world case, we tackle the LassoBench DNA problem. For the latter, we produce different versions of this problem with varying dimensionalities. To achieve this, we turn some dimensions off by permanently setting them to zero and allowing the algorithm to vary only the remaining ones.
 
 We compare the best regret achieved by the algorithm in \cite{han2021high} (titled Tree in the figure) and RDUCB after 500 iterations for Stybtang and 700 iterations for LassoBench DNA, respectively. We show the results of ablation in Figure \ref{fig:ablation_bar}. Although for a small number of dimensions, Tree can slightly outperform RDUCB, we see that as the number of dimensions increases, RDUCB starts to outperform, and the gap widens for higher-dimensional problems.

 \begin{figure}[h]
    \centering
    \includegraphics[trim={4em 0.5em 5em 2em}, clip=true, width=\columnwidth]{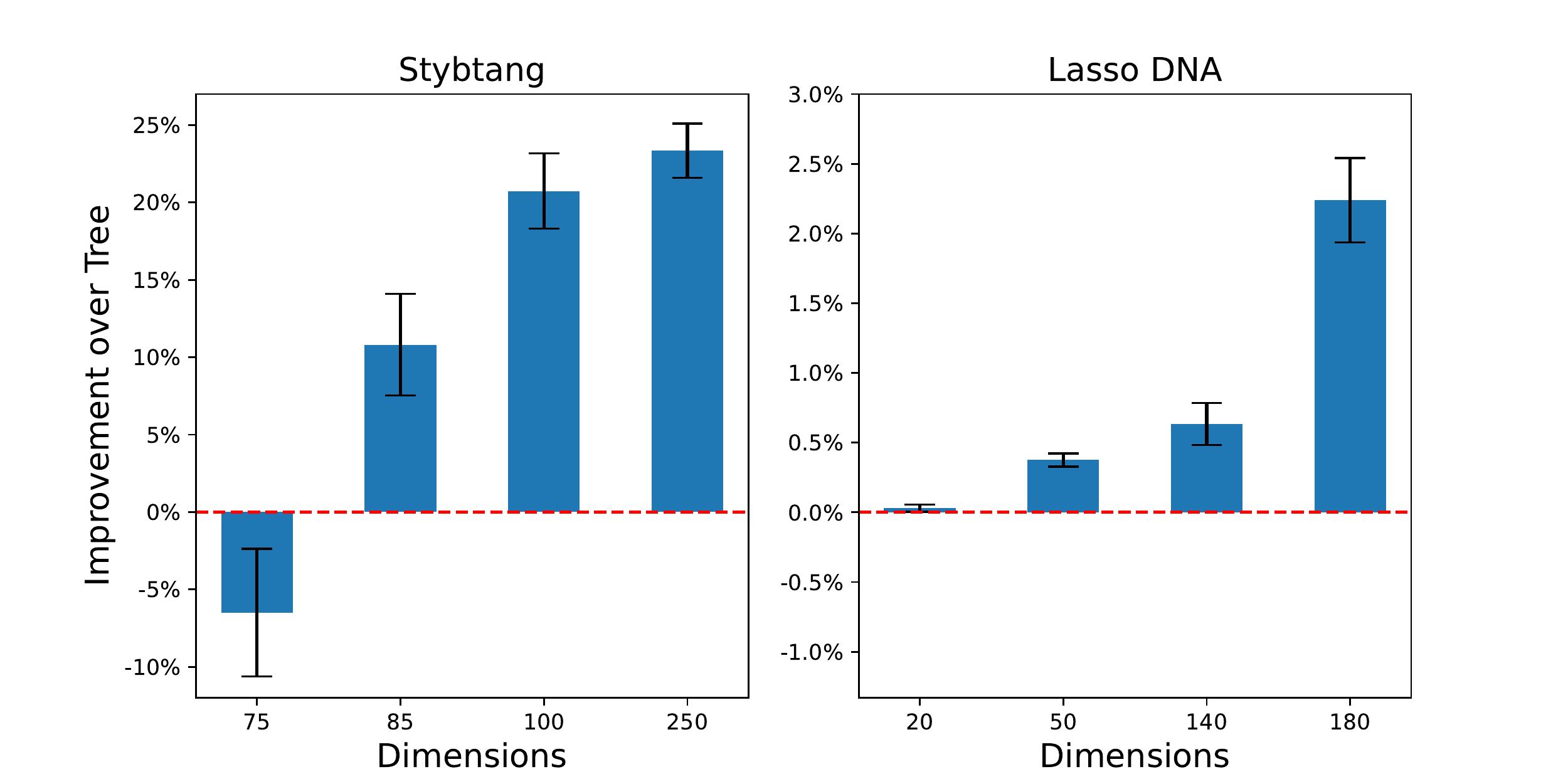}
    \caption{Improvement of RDUCB over Tree \cite{han2021high} in terms of final best regret for different dimensionality of the problems. Error bars correspond to standard errors.}
    \label{fig:ablation_bar}
\end{figure}

\subsection{Plug-and-Play for RDHEBO}\label{Sec:RD-HEBO}
RDUCB presents a simple  modification strategy to modelling using GPs, allowing it to plug on top of existing BO algorithms. This section supports our claim by introducing RDHEBO, an adaptation of HEBO \cite{cowen2022hebo} that uses our random decomposition scheme. We keep all the other components (warping functions and multi-objective acquisitions) of HEBO unchanged and present a comparison conducted on the highest dimensional Bayesmark tasks in Table 1. We can see that our modification has improved the performance of HEBO across those tasks, rising from a score of 92.68 to 93.67 in the MLP-Adam task, for instance.  

\begin{table}[h]
    \centering
    \begin{tabular}{c|c|c}
        Problem &\textbf{HEBO} & \textbf{RDHEBO} \\
        \midrule
        MLP-Adam &  $92.68 \pm 0.22$ & \bm{$93.67 \pm 0.30$} \\
        MLP-SGD &  $90.66 \pm 0.81$ & \bm{$91.65 \pm 0.10$}\\
        DT & $79.42 \pm 0.45$ & \bm{$80.79 \pm 0.15$}\\
        RF & $84.97 \pm 0.32$ & \bm{$87.64 \pm 2.00$} \\
        \midrule
        Average & $86.93 \pm 0.45$ & $\textbf{88.44} \pm \textbf{0.64}$
    \end{tabular}
    \caption{Bayesmark normalised score with standard errors for HEBO and RDHEBO demonstrating that our modifications can improve HEBO's performance. We run 10 seeds for MLP-Adam and DT and 15 for MLP-SGD and RF.}
    \label{tab:my_label}
\end{table}

\section{Related Work} \label{Sec:RelatedWork}

\textbf{Random projection methods} project the inputs from a high dimensional space to a space of lower dimensionality by randomised mappings. 
To do so, REMBO \cite{wang2013rembo} uses a matrix with entries sampled from a normal distribution. Though successful, REMBO tends to over-explore regions that are on the boundary of the  domain of the black-box function. The authors in \cite{letham2020re} proposed ALEBO, an algorithm that addresses some of REMBO's problems, while \citet{nayebi2019framework} use different projection types altogether. The algorithm they propose - HeSBO, initialises a randomised hashing function that maps back to the original space, leading to improved results. 
As opposed to all those techniques, RDUCB takes a different direction of decomposing the original space rather than projecting to lower dimensional manifolds.  


\textbf{LineBO methods} \cite{kirschner2019adaptive} conduct optimisation over an affine, one-dimensional subspace. This subspace is chosen so that the best point found so far belongs to it. There are different variants of LineBO, where the direction of this one-dimensional space is chosen randomly (Random LineBO), aligned with a randomly chosen dimension (Coordinate LineBO) or along a gradient estimate (Descent LineBO). We chose to compare RDUCB with Coordinate LineBO, as in most cases it outperforms other LineBO variants.

\textbf{Latent space methods} perform the optimisation in some latent space and then utilise a generative model as a projection to original space. A commonly taken approach is to utilise the latent space of a  Variational AutoEncoder \cite{kingma2013auto}, pretrained on some larger batch of existing data. However, over the course of this pretraining, we need to ensure that the obtained latent space is well-suited for BO. There have been numerous VAE-based algorithms proposed \cite{eissman2018bayesian, gomez2018automatic, zhang2019high, tripp2020sample,  griffiths2020constrained, siivola2021good, grosnit2021high}, employing various mechanisms to achieve this goal.

While using deep networks to scale BO is prominent and essential, RDUCB constitutes an orthogonal research direction where our discoveries, at least in the modelling component, can (easily) plug-and-play in latent spaces, e.g., when using deep kernel GPs \cite{wilson2016deep} in general. We instantiated one such plug-and-play use-case in this paper (see Section \ref{Sec:RD-HEBO}) and planned to further investigate such applications in the future. 

\textbf{Dropout methods} \cite{li2017dropout} randomly drop several dimensions and optimise only a subset at each step. When querying for new values of the black-box function, the value of dimensions that were not optimised at a given iteration are selected according to some filling strategy that can either copy the value of the best point so far or choose a random value. Though successful in isolated instances, dropout methods exhibit high-sample complexity.  

\textbf{SAASBO} \cite{eriksson2021high} adopts a hierarchical Bayesian model, putting a prior on the length scale with a high probability mass concentrated at zero. Hence, a dimension will be effectively removed from the model unless there is enough evidence that it dramatically impacts the black-box function value.

\textbf{Trust region methods} tackle high dimensional spaces by optimising locally within a chosen trust region. The TuRBO algorithm \cite{eriksson2019scalable} maintains several trust regions and selects one at a given time using a multi-armed bandit strategy. CASMOPOLITAN \cite{wan2021think} extends this idea to handle categorical and mixed input spaces. 

Our research in this paper is orthogonal to SAASBO and trust region methods. In the future, we plan to integrate our modelling component with the works in \cite{eriksson2019scalable} and \cite{eriksson2021high}.

\section{Discussion \& Future Work}
This paper introduced random decompositions to scale BO to high-dimensional spaces. Our algorithm, RDUCB, is simple to implement, empirically effective and theoretically grounded. We compared our method against the prior decomposition state-of-the-art technique and showed improved best-regret results on a wide set of publicly available benchmarks. While this is the first step in understanding data-independent rules for high dimensional BO, there is a number of challenges that we now elaborate on. 

Currently, RDUCB is not capable of handling non-numerical inputs (e.g., graphs or strings). This problem is not only tied to RDUCB but also challenges other decomposition-based techniques since we cannot easily define a notion of decomposition for such input types. We plan to investigate such decomposition rules in the future.  

Apart from tackling technical challenges, in the future, we plan to scale RDUCB to thousands of dimensions. We hope to benefit from distributed and high-performance computing. Of course, this direction requires us to devise novel consensus-based acquisition optimisers, which have recently seen progress in non-convex settings \cite{zhang2022distributed}.

Although we applied the modelling component of RDUCB to HEBO, we believe many BO frameworks and algorithms can benefit from our decomposition scheme. Those include trust-region methods, latent space techniques, and combinatorial BO solvers. In the future, we will also investigate this direction in combination with scalability.

\bibliography{example_paper}
\bibliographystyle{icml2023}

\newpage
\appendix

\onecolumn
\section{Proofs of Theoretical Results} \label{ap:proofs}
\subsection{Proof of Theorem \ref{th:regretbound}} \label{ap:proof_regretbound}
\regretbound*
\begin{proof} Without the loss of generality, we assume kernel for every decomposition is normalised, i.e. $\forall_{g \in \mathcal{G}} k^g(\bm{x}, \bm{x}') \le 1$ and that our function observations are corrupted by some $\sigma_n$-subgaussian noise.
Our proof follows the idea of the proof of Theorem 1 from \cite{bogunovic2021misspecified}, with some differences. Let $k_t = k^{g_t}$ be the kernel defined by decomposition $g_t$ selected at time $t$. $\hat{f}^t = \textrm{argmin}_{f' \in \mathcal{H}^{t}} |f - f'|_{\infty}$ be a function living in RKHS $\mathcal{H}^{t}$ of kernel $k_t$ that is closest to the true function $f$ measured by the infinity norm.  To avoid clutter we will  write $\mu_t = \mu_t^{g_t}$ and $\sigma_t = \sigma_t^{g_t}$.
 Let $\bm{y}^*_{t}$ be the vector of corrupted observations generated by $\hat{f}_t$, i.e. $\bm{y}^*_{t} = (\hat{f}_t(\bm{x}_1) + n_1, \dots, \hat{f}_t(\bm{x}_t) + n_t)$ for some $\sigma_n$-subgaussian noise $n_t$. Then let us define $\mu^*_{ t}$ as the Gaussian Process posterior utilising kernel $k_t$, which uses observations $\bm{y}^*_{t}$, i.e.:
\begin{equation*}
    \mu^*_{t}(\bm{x}) = \bm{k}_{t}^{\mathsf{T}}(\bm{x})\left(\bm{K}_{t} + \sigma_n^2\bm{I}\right)^{-1} \bm{y}^*_{t}
\end{equation*}
In contrast, the mean computed by the algorithm $ \mu_{t}(\bm{x})$ uses the full observations $\bm{y}_{t}= (f(\bm{x}_1) + n_1, \dots, f(\bm{x}_t) + n_t)$ from the true black-box function $f$, without knowing which part came from $\hat{f}_t$. Let us assume, we run a UCB-style BO algorithm, with an acquisition function $\alpha_t(\bm{x})$ that we will define later.
Let $\bm{x}^* = \textrm{argmax}_{\bm{x} \in \mathcal{X}}f(\bm{x})$ be the maximiser of black-box function. If we now look at the instantaneous regret, we get:
\begin{equation*}
    r_t = f(\bm{x}^*) - f(\bm{x}_t) = f(\bm{x}^*) - \max_{\bm{x} \in \mathcal{X}}\hat{f}_t(\bm{x}) + \max_{\bm{x} \in \mathcal{X}}\hat{f}_t(\bm{x}) - f(\bm{x}_t) \le
\end{equation*}
\begin{equation*}
    \le \Big(f(\bm{x}^*) - \hat{f}_t(\bm{x}^*) \Big) + \left(\hat{f}_t(\bm{x}^*) - \max_{\bm{x} \in \mathcal{X}}\hat{f}_t(\bm{x}) \right) + \left(\max_{\bm{x} \in \mathcal{X}}\hat{f}_t(\bm{x} ) - \hat{f}_t(\bm{x}_t) \right) + \left(\hat{f}_t(\bm{x}_t) - f(\bm{x}_t) \right) \le 
\end{equation*}
\begin{equation} \label{eq:inst_regret_as_ft_diff}
    \le 2\epsilon_t + \max_{\bm{x} \in \mathcal{X}}\hat{f}_t(\bm{x}) - \hat{f}_t(\bm{x}_t),
\end{equation}
where the last inequality is due to the fact that $|f(\bm{x}) - \hat{f}_t(\bm{x})| \le \epsilon_t$ and  $ \hat{f}_t(\bm{x}^*) \le \max_{\bm{x} \in \mathcal{X}}\hat{f}_t(\bm{x})$.
 Consequently, $\max_{\bm{x} \in \mathcal{X}}\hat{f}_t(\bm{x}) - \hat{f}_t(\bm{x}_t)$ becomes the new term we need to bound. We now observe the following:
\begin{equation*}
    \hat{f}_t(\bm{x}) = \mu_{t-1}(\bm{x}) + \left( \hat{f}_t(\bm{x}) - \mu^*_{t-1}(\bm{x}) \right) + \left(\mu^*_{t-1}(\bm{x}) - \mu_{t-1}(\bm{x}) \right)
\end{equation*}
Hence, we would like to bound the differences $\hat{f}_t(\bm{x}) - \mu_t^*(\bm{x})$ and $\mu_t(\bm{x}) - \mu_t^*(\bm{x})$. To do so, we recall two lemmas from existing literature. 
\begin{restatable}[Adapted Theorem 2 from \cite{chowdhury2017kernelized}]{lemma}{ucb} \label{lemma:ucb}
 Let $\mathcal{X} \subset \mathbb{R}^d$, and $\hat{f}_t : \mathcal{X} \to \mathbb{R}$ be a member of the RKHS of real-valued functions on $\mathcal{X}$ with kernel $k_t$ defined by $g_t \in \mathcal{G}$, where $g_t$ can be changed at each $t$, with
RKHS norm bounded by $\lVert \hat{f}_t \rVert_{t} \le B$. Let the observations $\bm{y}_t$ be corrupted by some $\sigma$-subgaussian noise. Then, with probability at least $1 - \delta$, the following holds for all $\bm{x} \in \mathcal{X}$:
\begin{equation*}
|\mu_{t-1}^*(\bm{x}) - \hat{f}_t(\bm{x})| \le \left( B + \sigma\sqrt{2\gamma_{t-1} + 1 + \ln(1/\delta))} \right) \sigma_{t-1}(\bm{x})
\end{equation*}
\end{restatable}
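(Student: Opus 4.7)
The plan is to adapt the proof of Theorem 2 in \cite{chowdhury2017kernelized} to our time-varying-kernel setting. The crucial observation is that this lemma is a pointwise-in-$t$ statement: for a fixed index $t$, both the kernel $k_t$ and the target function $\hat{f}_t$ are held fixed inside the probabilistic argument, so the fact that the kernel changes across $t$ enters only through the definition of $\gamma_{t-1}$. Once we verify that the hypotheses of the self-normalised concentration machinery are met for the pair $(k_t, \hat{f}_t)$ and the hypothetical observation vector $\bm{y}^*_{t-1}$, the standard Chowdhury--Gopalan argument transfers directly.

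Concretely, I would first fix the filtration $\mathcal{F}_s = \sigma(\bm{x}_1, n_1, \dots, \bm{x}_s, n_s)$ and note two facts: (a) $\bm{x}_{s+1}$ is $\mathcal{F}_s$-measurable because the algorithm computes it from the real observations $y_i = f(\bm{x}_i) + n_i$, which already lie in $\mathcal{F}_s$; and (b) $n_s$ is $\sigma$-subgaussian conditional on $\mathcal{F}_{s-1}$ by assumption. These measurability conditions depend only on the noise sequence and the query rule, not on which deterministic function is added to the noise. I would then apply Theorem 1 of \cite{chowdhury2017kernelized} to the residual $\bm{y}^*_{t-1} - \bm{m}_{\hat{f}_t}$, where $\bm{m}_{\hat{f}_t}$ is the vector of $\hat{f}_t$-values at the queried points; by construction this residual equals the noise vector, so the inequality yields a self-normalised bound on $\lVert \bm{y}^*_{t-1} - \bm{m}_{\hat{f}_t}\rVert_{(\bm{K}_t + \sigma^2 \bm{I})^{-1}}$ involving the information gain for kernel $k_t$. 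A reproducing-property computation plus Cauchy--Schwarz in $\mathcal{H}_t$, combined with the RKHS-norm bound $\lVert \hat{f}_t \rVert_t \le B$, converts this into the pointwise inequality on $|\mu_{t-1}^*(\bm{x}) - \hat{f}_t(\bm{x})|$; the last step is to replace the kernel-$k_t$-specific information gain by the generalised $\gamma_{t-1}$ defined earlier in the paper.

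The main subtlety I would highlight is that $\bm{y}^*_{t-1}$ is a thought-experiment object: the algorithm never actually observed these values, and the queries $\bm{x}_s$ were in fact selected based on the \emph{real} observations of $f$, not of $\hat{f}_t$. One might worry that this mismatch between the ``selection'' and ``concentration'' processes invalidates the martingale argument. It does not: the self-normalised concentration inequality only requires the noise increments to be subgaussian with respect to a filtration in which the queries are predictable, and both properties continue to hold because $f$ and $\hat{f}_t$ are deterministic and the same noise sequence drives the real and the hypothetical observations. A secondary obstacle is carefully justifying that the generalised $\gamma_{t-1}$ in the conclusion upper-bounds the kernel-$k_t$-specific information gain that arises naturally from the concentration step, which needs to be spelt out in light of the mixed-kernel definition of $\gamma_T$ given in the paper.
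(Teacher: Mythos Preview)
Your proposal is correct and takes essentially the same approach as the paper, whose entire proof is the single sentence ``The proof is identical to the proof of Theorem 2 in \cite{chowdhury2017kernelized}, except that now the kernel depends on time.'' Your discussion of the filtration, the thought-experiment nature of $\bm{y}^*_{t-1}$, and the replacement of the kernel-$k_t$ information gain by the generalised $\gamma_{t-1}$ spells out exactly the subtleties the paper leaves implicit.
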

\begin{proof}
The proof is identical to the proof of Theorem 2 in \cite{chowdhury2017kernelized}, except that now the kernel depends on time.

\end{proof}

\begin{restatable}[Lemma 2 from \cite{bogunovic2021misspecified}]{lemma}{mismean} \label{lemma:mismean}
For any $\bm{x} \in \mathcal{X}$ and $t \ge 1$, we have
\begin{equation*}
|\mu_{t-1}(\bm{x}) - \mu^*_{t-1}(\bm{x})| \le \frac{\sigma_{t-1}(\bm{x})}{\sigma_n} \epsilon_t \sqrt{t}  ,
\end{equation*}
where $\epsilon_t = \min_{f' \in \mathcal{H}^{t}} |f - f'|_{\infty}$.
\end{restatable}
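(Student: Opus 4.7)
The plan is to exploit the fact that $\mu_{t-1}$ and $\mu_{t-1}^*$ arise from the same regression operator applied to target vectors that share the same noise realisation. Substituting the definitions, the $n_i$'s cancel out and one is left with the clean identity
\begin{equation*}
\mu_{t-1}(\bm{x}) - \mu_{t-1}^*(\bm{x}) = \bm{k}_{t-1}^{\mathsf{T}}(\bm{x}) \bigl(\bm{K}_{t-1} + \sigma_n^2 \bm{I}\bigr)^{-1} \bm{\eta},
\end{equation*}
where $\bm{\eta}_i = f(\bm{x}_i) - \hat{f}_t(\bm{x}_i)$ for $i = 1, \dots, t-1$. Since $\hat{f}_t$ is the $\|\cdot\|_\infty$-closest function to $f$ in $\mathcal{H}_{t}$, each coordinate is bounded by $\epsilon_t$, yielding $\|\bm{\eta}\|_2 \le \epsilon_t \sqrt{t-1} \le \epsilon_t \sqrt{t}$.

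Next, write $A := \bm{K}_{t-1} + \sigma_n^2 \bm{I}$ and symmetrically split the inner product through $A^{-1/2}$, so that Cauchy--Schwarz gives
\begin{equation*}
|\mu_{t-1}(\bm{x}) - \mu_{t-1}^*(\bm{x})| \le \sqrt{\bm{k}_{t-1}^{\mathsf{T}}(\bm{x}) A^{-1} \bm{k}_{t-1}(\bm{x})} \cdot \sqrt{\bm{\eta}^{\mathsf{T}} A^{-1} \bm{\eta}}.
\end{equation*}
The second factor is controlled by the observation $A \succeq \sigma_n^2 \bm{I}$, which implies $\sqrt{\bm{\eta}^{\mathsf{T}} A^{-1} \bm{\eta}} \le \|\bm{\eta}\|_2/\sigma_n \le \epsilon_t \sqrt{t}/\sigma_n$. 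The first factor is linked directly to the predictive variance through the standard GP identity $\bm{k}_{t-1}^{\mathsf{T}}(\bm{x}) A^{-1} \bm{k}_{t-1}(\bm{x}) = k_t(\bm{x},\bm{x}) - \sigma_{t-1}^2(\bm{x})$.

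The delicate step is sharpening this first factor so that the predictive standard deviation $\sigma_{t-1}(\bm{x})$ itself, rather than the coarser $\sqrt{k_t(\bm{x},\bm{x}) - \sigma_{t-1}^2(\bm{x})}$, appears in the final bound. I would import this algebraic manipulation verbatim from the original proof in Bogunovic et al.: it invokes only spectral properties of $A$ and $\bm{k}_{t-1}(\bm{x})$ at a single round $t-1$ and is therefore oblivious to whether the kernel was fixed a priori or selected by a data-independent scheme $S(t)$, so the argument carries through unchanged in our time-varying kernel setting. Multiplying the two factors then yields the claimed inequality. I expect this sharpening to be the main technical obstacle; the noise cancellation and the symmetric Cauchy--Schwarz step are essentially routine.
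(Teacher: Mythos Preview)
The paper does not actually prove this lemma; it quotes it verbatim as Lemma~2 of Bogunovic and Krause and uses it as a black box, so there is no in-paper argument to compare against. Your remark that the statement carries over unchanged to a time-varying kernel $k_t$ --- because it is a single-round linear-algebraic fact about $\bm{K}_{t-1}$, $\bm{k}_{t-1}(\bm{x})$ and $\bm{\eta}$ --- is precisely the point the paper is relying on.

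That said, your proposed route has a real gap. After the symmetric $A^{-1/2}$ split you obtain the factor $\sqrt{k_t(\bm{x},\bm{x}) - \sigma_{t-1}^2(\bm{x})}$ and describe the remaining step as ``sharpening'' it to $\sigma_{t-1}(\bm{x})$. But these two quantities are not comparable: when $\sigma_{t-1}(\bm{x})$ is small (the regime that drives the regret analysis), $\sqrt{k_t(\bm{x},\bm{x}) - \sigma_{t-1}^2(\bm{x})}\approx \sqrt{k_t(\bm{x},\bm{x})}$ stays bounded away from zero, so no inequality of the form $\sqrt{k_t(\bm{x},\bm{x}) - \sigma_{t-1}^2(\bm{x})} \le C\,\sigma_{t-1}(\bm{x})$ can hold. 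The symmetric split is a dead end, not a coarse version of the target.

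The argument in Bogunovic--Krause instead applies the plain Cauchy--Schwarz
\[
|\bm{k}_{t-1}^{\mathsf{T}}(\bm{x}) A^{-1}\bm{\eta}| \le \|\bm{\eta}\|_2\, \|A^{-1}\bm{k}_{t-1}(\bm{x})\|_2
\]
and then shows $\|A^{-1}\bm{k}_{t-1}(\bm{x})\|_2 \le \sigma_n^{-1}\sigma_{t-1}(\bm{x})$. The latter is the genuinely nontrivial step; it follows from the feature-map identities $\sigma_{t-1}^2(\bm{x}) = \sigma_n^2\,\phi(\bm{x})^{\mathsf{T}} V^{-1}\phi(\bm{x})$ and $\|A^{-1}\bm{k}_{t-1}(\bm{x})\|_2^2 = \phi(\bm{x})^{\mathsf{T}} V^{-1}(V-\sigma_n^2 I)V^{-1}\phi(\bm{x}) \le \phi(\bm{x})^{\mathsf{T}} V^{-1}\phi(\bm{x})$, with $V$ the regularised feature second-moment operator. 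Your noise-cancellation step and the bound $\|\bm{\eta}\|_2 \le \epsilon_t\sqrt{t}$ are correct; only the choice of Cauchy--Schwarz splitting needs to change.
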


We adopt the notation $\hat{\bm{x}}_t= \max_{\bm{x} \in \mathcal{X}}\hat{f}_t(\bm{x})$.
Following up on Inequality \ref{eq:inst_regret_as_ft_diff}, by Lemmas \ref{lemma:ucb} and \ref{lemma:mismean} with probability at least $1 - \delta$, we get that the cumulative regret $R_T$ admits the following upper bound:
\begin{equation*}
    R_T = \sum_{t=1}^T r_t \le \sum_{t=1}^T 2\epsilon_t + \sum_{t=1}^T\hat{f}_t(\hat{\bm{x}}) - \hat{f}_t(\bm{x}_t) \le 
\end{equation*}
\begin{equation*}
    \le \sum_{t=1}^T 2\epsilon_t + \sum_{t=1}^T \mu_{t-1}(\hat{\bm{x}}_t) + \beta_t \sigma_{t-1}(\hat{\bm{x}}_t) - \left(\mu_{t-1}(\bm{x}_t) - \beta_t \sigma_{t-1}(\bm{x}_t) \right) ,
\end{equation*}
where we have introduced $\beta_t$ defined as:
\begin{equation}\label{eq:bonus}
    \beta_{t} = B + \sigma \sqrt{2(\gamma_{t-1} + 1 + \ln(1/\delta))} + \frac{ \epsilon_t \sqrt{t}}{\sigma_n} .
\end{equation}

Let us now define the acquisition rule of our BO algorithm as $\max_{\bm{x} \in \mathcal{X}}  \alpha_t(\bm{x})$, where
\begin{equation*} 
    \alpha_t(\bm{x}) =  \max_{\bm{x} \in \mathcal{X}} \mu_{t-1}(\bm{x}) + \beta_{t}\sigma_{t-1}(\bm{x})
\end{equation*}

By the acquisition rule, if point $\bm{x}_t$ was selected, then $\mu_{t-1}(\bm{x}_t) + \beta_t \sigma_{t-1}(\bm{x}_t) \ge \mu_{t-1}(\hat{\bm{x}}_t) + \beta_t \sigma_{t-1}(\hat{\bm{x}}_t)$ and so:
\begin{equation*}
    R_T \le \sum_{t=1}^T2\epsilon_t + \sum_{t=1}^T \mu_{t-1}(\hat{\bm{x}}_t) + \beta_t \sigma_{t-1}(\hat{\bm{x}}_t) - \mu_{t-1}(\bm{x}_t) + \beta_t \sigma_{t-1}(\bm{x}_t)
\end{equation*}
\begin{equation*}
    \le \sum_{t=1}^T2\epsilon_t + \sum_{t=1}^T \mu_{t-1}(\bm{x}_t) + \beta_t \sigma_{t-1}(\bm{x}_t) - \mu_{t-1}(\bm{x}_t) + \beta_t \sigma_{t-1}(\bm{x}_t) = \sum_{t=1}^T\epsilon_t + \sum_{t=1}^T 2\beta_t \sigma_{t-1}(\bm{x}_t) 
\end{equation*}
Substituting the definition of $\beta_t$ (Eq. \ref{eq:bonus}) we get:
\begin{equation*}
    R_T \le \sum_{t=1}^T2\epsilon_t + \sum_{t=1}^T2 \left(B + \sigma\sqrt{2(\gamma_{t} + 1 + \ln(1/\delta))} + \frac{ \epsilon_t \sqrt{t}}{\sigma_n} \right) \sigma_{t-1}(\bm{x}_t) \le
\end{equation*}

\begin{equation*}
      \le \sum_{t=1}^T2\epsilon_t + 2 \left(B + \sigma\sqrt{2(\gamma_{T} + 1 + \ln(1/\delta))} \right)  \sum_{t=1}^T\sigma_{t-1}(\bm{x}_t) +  \frac{\sqrt{T}}{\sigma_n} \sum_{t=1}^T\epsilon_t\sigma_{t-1}(\bm{x}_t) 
\end{equation*}
We now observe that by Cauchy-Schwarz we have $\sum_{t=1}^T\sigma_{t-1}(\bm{x}_t) \le \sqrt{T \sum_{t=1}^T\sigma_{t-1}^2(\bm{x}_t)}$ and \\
$\sum_{t=1}^T\epsilon_t\sigma_{t-1}(\bm{x}_t) \le \sqrt{\sum_{t=1}^T\epsilon^2_t } \sqrt{\sum_{t=1}^T\sigma^2_{t-1}(\bm{x}_t)}$. We thus obtain the following bound:
 \begin{equation*}
     R_T \le \sum_{t=1}^T2\epsilon_t + 2 \left(B + \sigma \sqrt{2(\gamma_{T} + 1 + \ln(1/\delta))} \right)  \sqrt{T \sum_{t=1}^T\sigma^2_{t-1}(\bm{x}_t) } +  \frac{ 1}{\sigma_n}\sqrt{\sum_{t=1}^T \epsilon_t^2}\sqrt{T \sum_{t=1}^T\sigma^2_{t-1}(\bm{x}_t) }
\end{equation*}

  Observe that $\sigma^2_{t-1}(\bm{x}_t) \le \sigma^2_n C \log (1 + \sigma^{-2}_n \sigma_{t-1}^2(\bm{x}_t))$, where $C = \sigma^{-2}_n / \log(1 + \sigma^{-2}_n )$ and by Lemma 5.3 of \cite{srinivas2009gaussian}, we have $\sum_{t=1}^T \log (1 + \sigma^{-2}_n \sigma_{t-1}^2(\bm{x}_t)) \le 2 \gamma_t$. We thus obtain:
\begin{equation*}
    R_T \le \sum_{t=1}^T2\epsilon_t + 2 \left( B + \sigma\sqrt{2(\gamma_T + 1 + \ln(1/\delta))} \right)  \sqrt{T \gamma_T} +  \frac{\sqrt{T \gamma_T}}{\sigma_n}\sqrt{\sum_{t=1}^T \epsilon_t^2} 
\end{equation*}
\begin{equation*}
     R_T =  \mathcal{O} \left ( \sqrt{T\gamma_T} \left( B  + \sqrt{\ln\frac{1}{\delta} + \gamma_T }   + \sum_{t=1}^T \epsilon_t \right)\right ) ,
\end{equation*}
 since due to $\forall_t \epsilon_t \ge 0$, we have $ \sqrt{\sum_{t=1}^T \epsilon_t^2} \le \sum_{t=1}^T\epsilon_t$.

\end{proof}
\subsection{Proof of Corollary \ref{Corr:Regret}}
\label{ap:proof_corrregret}
\finalregretbound*
\begin{proof}

    Using Markov's inequality we have: $
        \mathbb{P}\text{r} \left ( \sum_{t=1}^T \epsilon_t \ge \mathbb{E}_S \left[\sum_{t=1}^T \epsilon_t\right] / \delta_B \right) \le \delta_B$.
    We now combine this fact with the bound developed in Theorem \ref{th:regretbound} and combine the probabilities using union bound to arrive at the corollary's statement. 
    \end{proof}
\subsection{Proof of Proposition \ref{prop:informationgain}}   \label{ap:proof_informationgain}
\informationgain*
\begin{proof}
    As shown in \cite{rolland2018high}, for an additive squared exponential kernel in $A$ dimensional space, such that each subkernel operates on at most $B$ dimensions, we get that the maximum information gain is bounded as:
    \begin{equation*}
        \gamma_T \le AB^B \log T ^{B + 1}
    \end{equation*}
    We observe that in our case $A = d$ and $B = 2$, which finishes the proof.
\end{proof}

\subsection{Proof of Proposition \ref{prop:lowerinformation}}
\label{ap:proof_lowerinformation}
\lowerinformation*
\begin{proof}
    We will denote by $\mathcal{G}$ the class of all possible tree decompositions in $d$ dimensions. As such $\bigcup_{g \in \mathcal{G}}$ is the set of all pairwise components. Additionally, we will introduce the following notation:
    \begin{equation*}
        f^{\textrm{full}} = \sum_{c \in \bigcup_{g \in \mathcal{G}}} f_c \quad \quad y^{\textrm{full}}_t = f^{\textrm{full}}(\bm{x}_t) + \epsilon_t
    \end{equation*}
    \begin{equation*}
        \hat{f}_t = \sum_{c \in  g_t} f_c \quad  \quad y_t = \hat{f}_t(\bm{x}_t) + \epsilon_t ,
    \end{equation*}
    where the quantities have the following distribution for all $c \in \bigcup_{g \in \mathcal{G}}$ and all $t >0$:
    \begin{equation*}
        f_c \sim \mathcal{GP}(0, k(\bm{x}, \bm{x}')) \quad \quad \epsilon_t \sim \mathcal{N}(0, \sigma^2_n).
    \end{equation*}
    For some fixed sequence $\bm{X}_T = (\bm{x}_1, \dots, \bm{x}_T)$, we will write $\bm{f}_T = (f_1(x_1), \dots, \hat{f}_T(x_T))$ and $\bm{y}_T = (y_1, \dots, y_T)$ and equivalently for $\bm{f}^{\textrm{full}}$ and $\bm{\tilde{y}}_T$. By properties of mutual information, we have:
    \begin{equation*}
        I(\bm{y}^{\textrm{full}}_T, \bm{f}^{\textrm{full}}) - I(\bm{y}_T, \bm{f}_T) = H(\bm{y}^{\textrm{full}}_T) - H(\bm{y}^{\textrm{full}}_T|\bm{f}^{\textrm{full}}) - H(\bm{y}_T) + H (\bm{y}_T| \bm{f}_T) =
        H(\bm{y}^{\textrm{full}}_T) - H(\bm{y}_T),
    \end{equation*}
    where the last inequality is true, as the conditional distributions are the same. Using the formula for the entropy of multivariate Gaussian we get:
    \begin{equation*}
        H(\bm{y}^{\textrm{full}}_T) - H(\bm{y}_T) = \frac{1}{2} \ln \left( \frac{\det(\mathcal{I}\sigma^2_n + K_T^{\textrm{full}})}{\det(\mathcal{I}\sigma^2_n + K_T)} \right) = \frac{1}{2} \sum_{t=1,\dots,T} \ln \left( \frac{\sigma^2_n + \lambda_t(K_T^{\textrm{full}})}{\sigma^2_n + \lambda_t(K_T)} \right) ,
    \end{equation*}
    where $\lambda_t(A)$ means the $t$-th  largest eigenvalue of $A$ and the covariance matrices are defined as $(K_T^{\textrm{full}})_{i,j} =  \sum_{c \in \bigcup_{g \in \mathcal{G}}} k_c(x_i, x_j)$ and $(K_T)_{i,j} =  \sum_{c \in g_i \cap g_j} k_c(x_i, x_j)$. We can thus write $K_T^{\textrm{full}} = K_T + \dot{K}_T$, where $(\dot{K}_T)_{i,j} = \sum_{c \notin g_i \cap g_j} k_c(x_i, x_j)$. One can easily see that $\dot{K}_T$ must be PSD, as such we have $\lambda_t(K_T^{\textrm{full}}) \ge \lambda_t(K_T) $, as adding a PSD matrix to another PSD matrix can never decrease its eigenvalues. This gives us:
    \begin{equation*}
        \frac{1}{2} \sum_{t=1,\dots,T} \ln \left( \frac{\sigma^2_n + \lambda_t(K_T^{\textrm{full}})}{\sigma^2_n + \lambda_t(K_T)} \right) \ge \frac{1}{2} \sum_{t=1,\dots,T} \ln \left( 1 \right) = 0,
    \end{equation*}
    with equality if and only if $\forall_{t=1,\dots,T}\lambda_t(K_T^{\textrm{full}}) = \lambda_t(K_T)$, which can happen only if all eigenvalues of $\dot{K}_T$ are zero, meaning $\forall_{t>0} g_t =  \bigcup_{g \in \mathcal{G}}$, i.e. all components are included at every step. This can only happen if there is only one decomposition in the class $|\mathcal{G}|$. As our class $\mathcal{G}$ consist of trees, this is only possible when $d \le 2$. 
\end{proof}

\subsection{Proof of Theorem \ref{th:optimalscheme}} \label{ap:proof_optimalscheme}
\optimalscheme*
\begin{proof}
     We start by introducing the following lemma.
    \begin{restatable}[]{lemma}{sum_of_mismatch} \label{lemma:sum_of_mismatches}
    Let $g$ be the true tree-based decomposition of $f$ and $\tilde{g} = g \setminus \Tilde{C}$ for some set of omitted pair-wise components $\Tilde{\mathcal{C}}$. Then we have:
    \begin{equation*}
        \min_{f' \in \mathcal{H}^{\tilde{g}}}|f - f'|_{\infty} \le \sum_{c \in \Tilde{C}} M_c  
    \end{equation*}
    \end{restatable}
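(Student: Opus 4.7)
The plan is to prove the lemma by exhibiting an explicit candidate $f' \in \mathcal{H}^{\tilde{g}}$ and then bounding the infinity-norm gap using the triangle inequality. The natural choice is to simply discard the component subfunctions corresponding to the dropped components: define
\[
f' \;=\; \sum_{c \in \tilde{g}} f_c \;=\; \sum_{c \in g \setminus \tilde{C}} f_c.
\]
Since $f$ lies in $\mathcal{H}^g$ with the additive decomposition $f = \sum_{c \in g} f_c$ (where each $f_c \in \mathcal{H}^{k_c}$), the sum above is a well-defined element of $\mathcal{H}^{\tilde{g}}$, because the RKHS induced by the additive kernel $k^{\tilde{g}} = \sum_{c \in \tilde{g}} k_c$ contains every sum of RKHS elements from the summand subkernels.

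Next I would write
\[
f - f' \;=\; \sum_{c \in g} f_c \;-\; \sum_{c \in \tilde{g}} f_c \;=\; \sum_{c \in \tilde{C}} f_c,
\]
and apply the triangle inequality for $|\cdot|_\infty$, followed by the per-component bound $|f_c|_\infty \le M_c$:
\[
|f - f'|_\infty \;\le\; \sum_{c \in \tilde{C}} |f_c|_\infty \;\le\; \sum_{c \in \tilde{C}} M_c.
\]
Since the minimum over $f' \in \mathcal{H}^{\tilde{g}}$ is no larger than the value attained by this particular choice, the lemma follows.

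The only genuinely subtle point, and the one I would want to justify carefully, is the claim that the candidate $f'$ really belongs to $\mathcal{H}^{\tilde{g}}$. This requires the standard fact that for a sum kernel $k = k_1 + k_2$ the associated RKHS is the sum of the individual RKHSs (in the sense of Aronszajn), with norm given by the appropriate infimum. I would cite this directly so as not to reprove it. Beyond that, the argument is an immediate triangle-inequality computation, so I do not foresee any further obstacle; the lemma is essentially a structural observation that omitting components from a decomposition can only introduce a mismatch of magnitude at most the sup-norm of what was omitted.
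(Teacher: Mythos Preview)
Your proposal is correct and matches the paper's own proof essentially step for step: the paper also splits $f=\sum_{c\notin\tilde{C}}f_c+\sum_{c\in\tilde{C}}f_c$, applies the triangle inequality, and then uses $\sum_{c\notin\tilde{C}}f_c\in\mathcal{H}^{\tilde g}$ together with $|f_c|_\infty\le M_c$. Your version is, if anything, slightly more explicit in justifying the RKHS membership of the candidate $f'$ via Aronszajn's sum-kernel result, which the paper simply asserts.
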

    \begin{proof}
        \begin{equation*}
            \min_{f' \in \mathcal{H}^{\tilde{g}}}|f - f'|_{\infty} = \min_{f' \in \mathcal{H}^{\tilde{g}}}|\sum_{c \notin \Tilde{\mathcal{C}}}f_c + \sum_{c \in \Tilde{\mathcal{C}}}f_c- f'|_{\infty} \le \min_{f' \in \mathcal{H}^{\tilde{g}}}|\sum_{c \notin \Tilde{\mathcal{C}}}f_c- f'|_{\infty} + \sum_{c \in \Tilde{\mathcal{C}}}|f_c |_{\infty} \le \sum_{c \in \Tilde{\mathcal{C}}} M_c 
        \end{equation*}
    where the first inequality is due to the triangle inequality and second due to the fact that  $|f_c |_{\infty} \le M_c$ and $\sum_{c \notin \Tilde{\mathcal{C}}}f_c \in \mathcal{H}_{\tilde{g}}$.
    \end{proof}
    Let us now define the probability of choosing a decomposition $g$ at time $t$ by our decomposition proposing scheme as $P^{\mathcal{G}}_t(g)$. A deterministic scheme will just be a special case of the probabilistic scheme, where all probability is concentrated on one decomposition. Instead of thinking about the adversary as selecting a function $f$, we can think about them as selecting the norm parameters for each pair-wise component $M_c$, with the constraint that $\sum_{c \in g} M_c \le M $. Since the adversary knows the scheme, in the worst case they can select the decomposition and function so that the expected mismatch is maximal. This corresponds to:
    \begin{equation*}
        \max_{g, f} \mathbb{E}_S\left[\sum_{t=1}^T \epsilon_t \right] = \max_{g, f} \sum_{g' \in \mathcal{G}}\sum_{t=1}^T \min_{f' \in \mathcal{H}^{g'}}|f - f'|_{\infty} P^{\mathcal{G}}_t(g') = \max_{g, \{M_{c}\}_{c \in g}} \sum_{t=1}^T \sum_{g' \in \mathcal{G}} \sum_{c \in g}M_c \mathbf{1}_{c \notin g'} P^{\mathcal{G}}_t(g') ,
    \end{equation*}
    where the last equality is due to Lemma \ref{lemma:sum_of_mismatches} and the fact that the adversary, in the worst case, will choose a function with the highest possible mismatch. We can now exchange the order of summation to obtain:
    \begin{equation*}
         \max_{g, f} \mathbb{E}_S\left[\sum_{t=1}^T \epsilon_t \right] = \max_{g, \{M_{c}\}_{c \in g}}  \sum_{c \in g}M_c    \sum_{g' \in \mathcal{G}}\mathbf{1}_{c \notin g'} \sum_{t=1}^T P^{\mathcal{G}}_t(g') = \max_{g, \{M_{c}\}_{c \in g}}  \sum_{c \in g} M_{c}\mathbb{E}_S[ N_{\neg c}] ,
    \end{equation*}
    where $\mathbb{E}_S[ N_{\neg c}] =\sum_{g' \in \mathcal{G}}\mathbf{1}_{c \notin g'} \sum_{t=1}^T P^{\mathcal{G}}_t(g')$ is the expected number of time that a pair-wise component $c$ is \textbf{not} included in the proposed decomposition. Note that the expression above is maximised subject to the constraint that $\sum_{c \in \mathcal{C}}M_c \le M$. Thus the maximum is achieved when the biggest mismatch is placed on the pair-wise component that is on average least selected. Formally, let $c^* = \arg \max_{c \in \mathcal{C}} \mathbb{E}_S[ N_{\neg c}]$, then the solution to the constrained maximization problem above is $M_c = M \mathbf{1}_{c=c^*}$ and $g$ can be any decomposition including $c^*$. We thus obtain:
    \begin{equation*}
        \max_{g, f} \mathbb{E}_S\left[\sum_{t=1}^T \epsilon_t \right] = M \max_{c \in \mathcal{C}}\mathbb{E}_S[ N_{\neg c}].
    \end{equation*}
    This is minimised, when for the selected scheme $S$ the expected number of times the least selected pair-wise component is not selected is minimal. This happens when the chance to include each of the  pair-wise components is the same. Since all of the decompositions have the same number of pair-wise components, this corresponds to a uniform distribution over all trees. This proves the claim that a uniformly random scheme achieves the smallest expected mismatch. Under the uniformly random scheme, we select $E$ pair-wise components at each time out of all possible $d(d-1)/2$, so the probability of any one component being selected is $\frac{2E}{d(d-1)}$ and the inverse event has a probability of $1 - \frac{2E}{d(d-1)}$. Since the scheme is the same across all timesteps we get $\mathbb{E}_S[ N_{\neg c}] = T\left(1 - \frac{2E}{d(d-1)}\right)$.
   
\end{proof}
\newpage
\section{Procedure for Sampling random Trees} \label{ap:treesampler}
\begin{algorithm}[h]
 \caption{Random Tree Sampler}
   \label{alg:samplingtrees}
    \begin{algorithmic}[1]
        \STATE {\bfseries Input:} \# of edges $E$, dimensionality of the problem $d$
        \STATE Set $L = [1, \dots, d]$
        \STATE Create $L_{\text{in}}$ and $L_{\text{out}}$ by randomly permuting $L$
        \STATE Initialise a Union-Find structure $\texttt{UF}$ \& empty graph $g$
        \FOR{$n_{\text{in}} \in L_{\text{in}}$}
            \FOR{$n_{\text{out}} \in L_{\text{out}}$}
                \IF{!$\texttt{UF}.\texttt{connected}$($n_{\text{in}}$, $n_{\text{out}}$)}
                    \STATE $\texttt{UF.union}$($n_{\text{in}}$, $n_{\text{out}}$)
                    \STATE $g$.$\texttt{add\_edge}$($n_{\text{in}}$, $n_{\text{out}}$)
                \ENDIF
                \IF{ $g$.$\texttt{number\_of\_edges}$() $= E$}
                    \STATE  {\bfseries Return} $g$
                \ENDIF
            \ENDFOR
        \ENDFOR
    \end{algorithmic}
\end{algorithm}

\newpage
\section{Algorithm Settings} \label{ap:algo_setting}
In Table \ref{tab:general_hyperparam}, we detail settings used by each algorithm. Those values are used for all of the experiments. 

\begin{table}[h]
    \centering
    \begin{tabular}{c|c|c}
        Algorithm & Setting & Value  \\
        \midrule
        Tree & Acquisition function &  Additive UCB with $ \beta_t = 0.5 \log(2t)$\\
         & Decomposition learning interval & 15\\
             & Gibbs sampling iterations &  100  \\
        \midrule 
         RDUCB & Acquisition function &  Additive UCB with $ \beta_t = 0.5 \log(2t)$\\
         & Size of random tree &  $\max \{\lfloor \sfrac{d}{5} \rfloor, 1\}$\\
        \midrule
        HeSBO & Acquisition function &  EI \cite{nguyen2017regret}\\
         & Size of embedding & $\sqrt{d}$ \\
         \midrule
         REMBO/ CoordinateLineBO & All & Default values from  \hyperlink{https://github.com/kirschnj/LineBO/blob/master/config/hartmann6.yaml}{here}\\
    \end{tabular}
    \caption{Hyperparameters used by algrotihms for all experiments. $t$ denotes the timestep and $d$ dimensionality of the problem.}
    \label{tab:general_hyperparam}
    
\end{table}

\subsection{Computing Resources}
All experiments were run on machines with specifications described in Table \ref{tab:compute}.

\begin{table}[h!]
    \centering
    \begin{tabular}{l|l}
        \textbf{Component} & \textbf{Description} \\
        \hline
        CPU & Intel Core i9-9900X CPU @ 3.50GHz \\
        GPU & Nvidia RTX 2080  \\
        Memory & 64 GB DDR4 \\
    \end{tabular}
    \caption{Specifications of machines used to run experiments.}
    \label{tab:compute}
\end{table}

\newpage

\section{Toy problem details} \label{App:toy_problem} 
In this section, we describe the details of the toy problem introduced in Section \ref{Sec:Th}. For this experiment, we use algorithm setting as per Table \ref{tab:general_hyperparam}. The function we use is three-dimensional, where the last dimension is redundant. Thus, in Figure \ref{fig:toyproblem} we only plot it as a function of two variables. We chose to add one redundant dimension as otherwise, RDUCB will always be sampling the same decomposition (size of random tree $E = 1$). The formula for the function is given below:
\begin{equation*}
    f(x,y,z) = w_1\mathcal{N}_{x,y}(\mu_1, \Sigma_1) + w_2\mathcal{N}_{x}(\mu_2, \Sigma_2) + w_3\mathcal{N}_{y}(\mu_3, \Sigma_3),
\end{equation*}
where $\mathcal{N}_{c}(\mu, \sigma)$ is a $|c|$-dimensional Gaussian PDF defined on dimensions in $c$ with mean $\mu$ and covariance matrix $\Sigma$. For the toy problem, we used the numerical values shown below.
\begin{equation*}
     w_1 = 1/6 \quad w_2 = w_3 = 2.5/6
\end{equation*}
\begin{equation*}
     \mu_1 = (800, 800)^T \quad \mu_2 = \mu_3 = (300)
\end{equation*}
\begin{equation*}
     \Sigma_1 = \begin{pmatrix}
     20000 & 15000 \\
     15000 & 20000
     \end{pmatrix} \quad
     \Sigma_2 = \Sigma_3 = (
     10000
     )
\end{equation*}
Thus, there are two local optima for $x$ and $y$, suboptimal  at $(300, 300)$ and global at $(800, 800)$. Variable $z$ can be set to any value, as it does not affect the function output. The initial points given to both Tree and RDUCB were exactly the same. We show them in Figure \ref{fig:init_design_toy_problem} below.

\begin{figure}[h]
    \centering
    \includegraphics{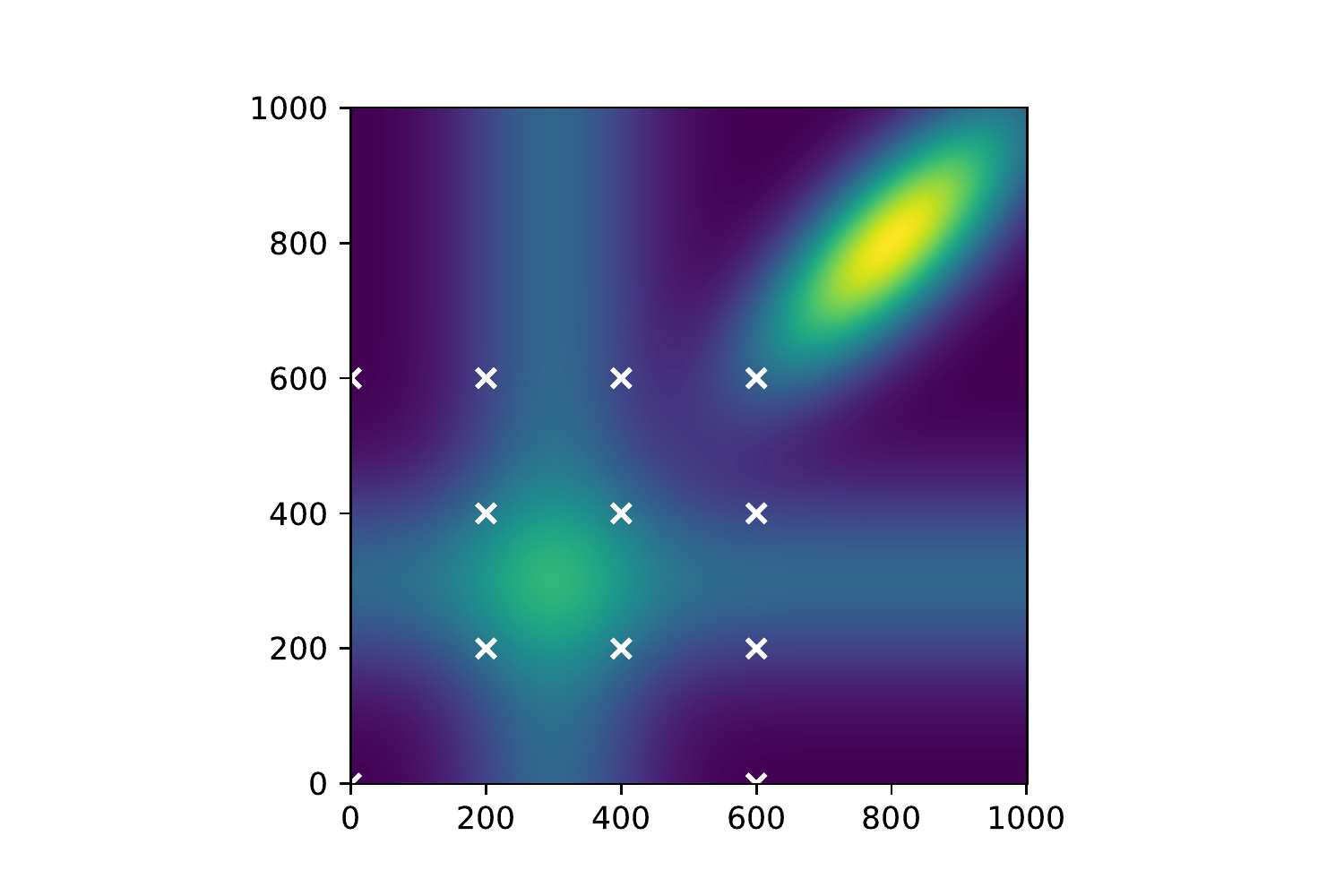}
    \caption{Initial points given to both Tree and RDUCB on the toy problem.}
    \label{fig:init_design_toy_problem}
\end{figure}
\newpage

\section{Additional Experimental Results} \label{App:Exps}

\begin{figure*}[h!]
    \begin{subfigure}[b]{0.31\textwidth}
  \centering
   \includegraphics[width=\linewidth]{figs/stybtang250.pdf}
  \caption{Stybtang250}
  \label{fig:stybtang250}
\end{subfigure}%
\begin{subfigure}{0.31\textwidth}
  \centering
   \includegraphics[width=\linewidth]{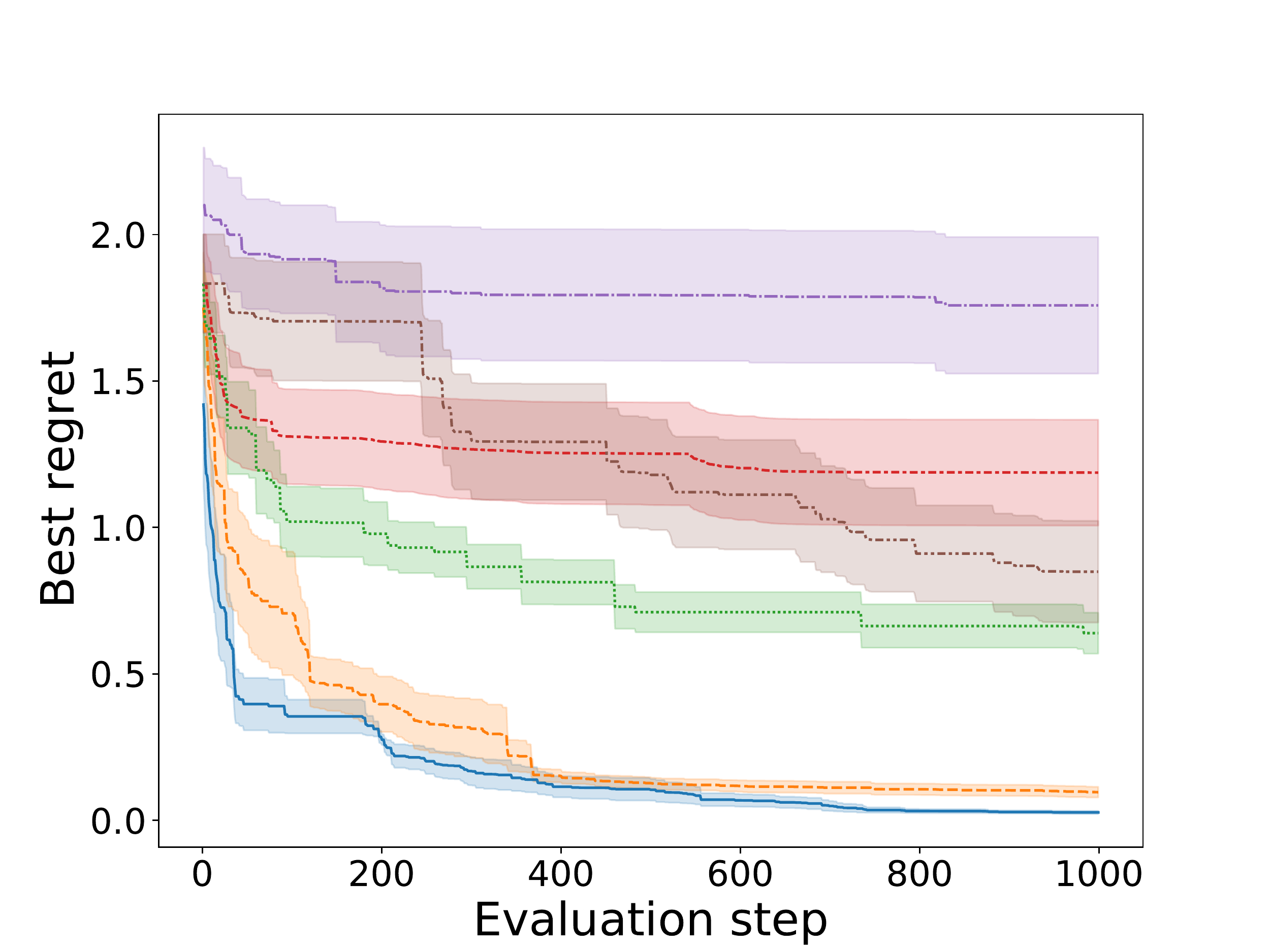}
  \caption{Hartmann6+14}
  \label{fig:hartmann614}
\end{subfigure}%
\begin{subfigure}{0.31\textwidth}
  \centering
   \includegraphics[width=\linewidth]{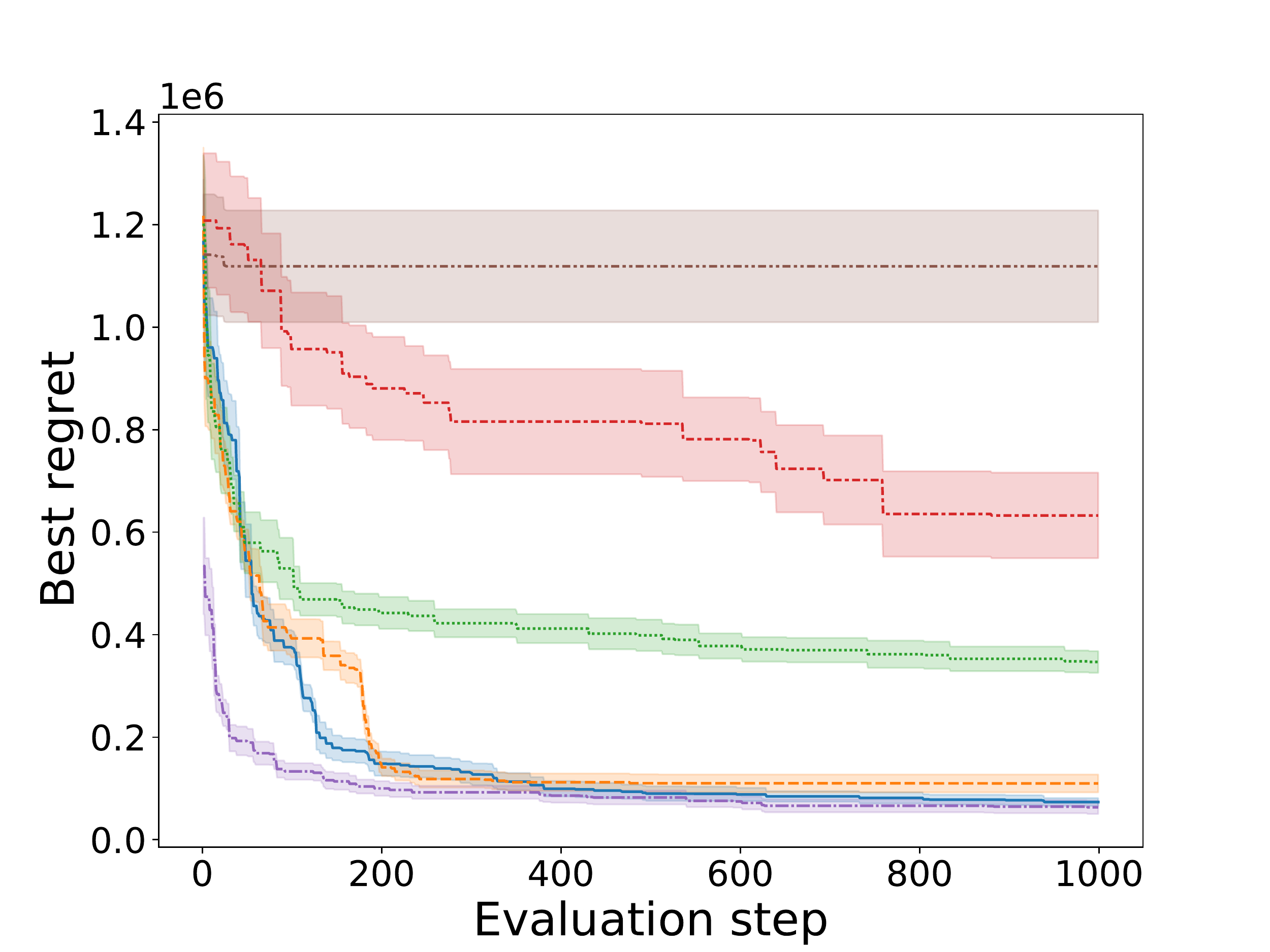}
  \caption{Rosenbrock20}
  \label{fig:rosenbrock20}
\end{subfigure}
\caption{Performance comparison on selected synthetic functions. Solid lines are the mean values over 10 seeds, and shaded areas correspond to standard error.}
 \label{fig:synthethic}
\end{figure*}

\begin{figure*}[h!]
    \begin{subfigure}[b]{0.31\textwidth}
  \centering
   \includegraphics[width=\linewidth]{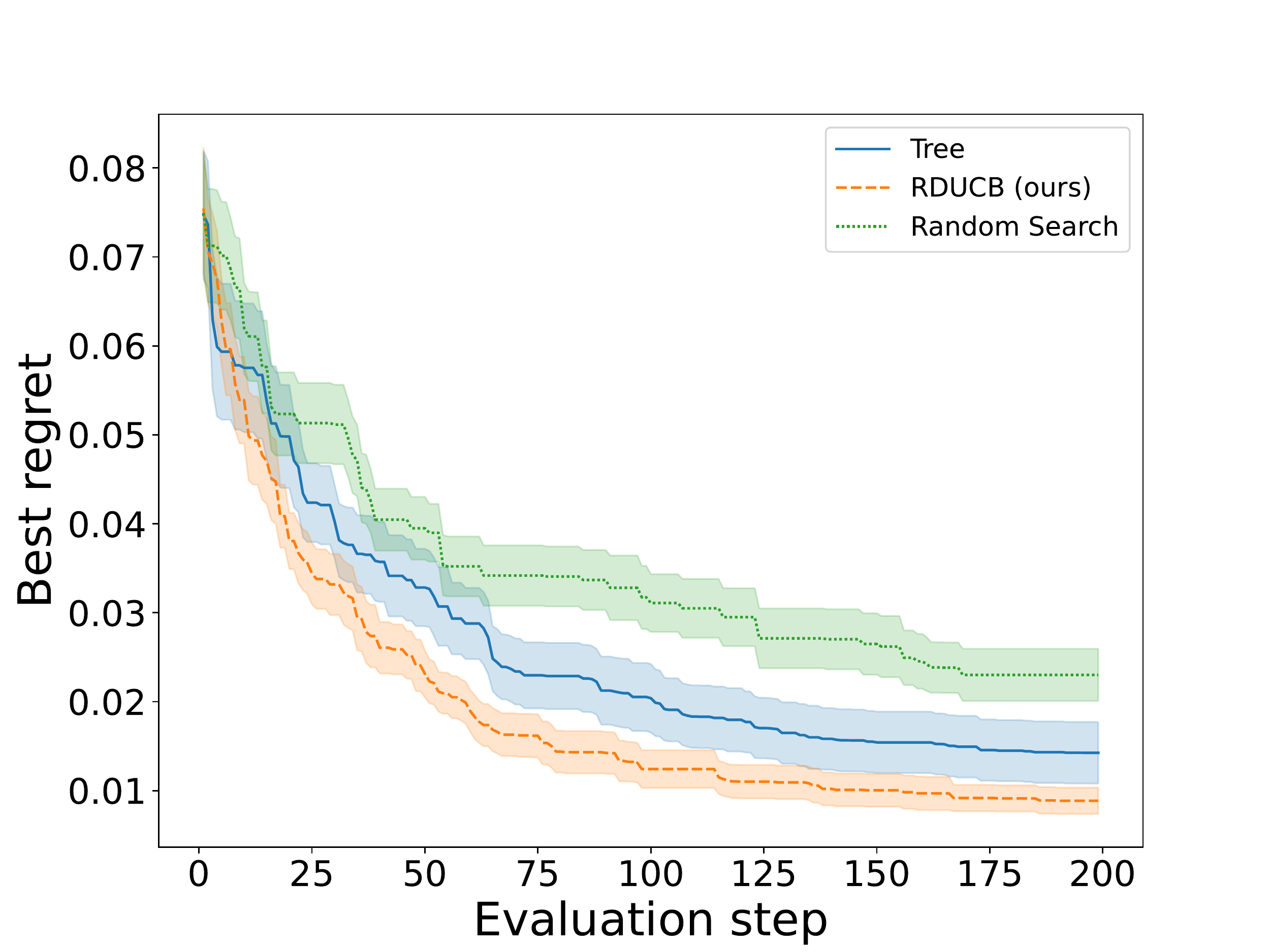}
  \caption{Protein}
  \label{fig:protein}
\end{subfigure}%
\begin{subfigure}{0.31\textwidth}
  \centering
   \includegraphics[width=\linewidth]{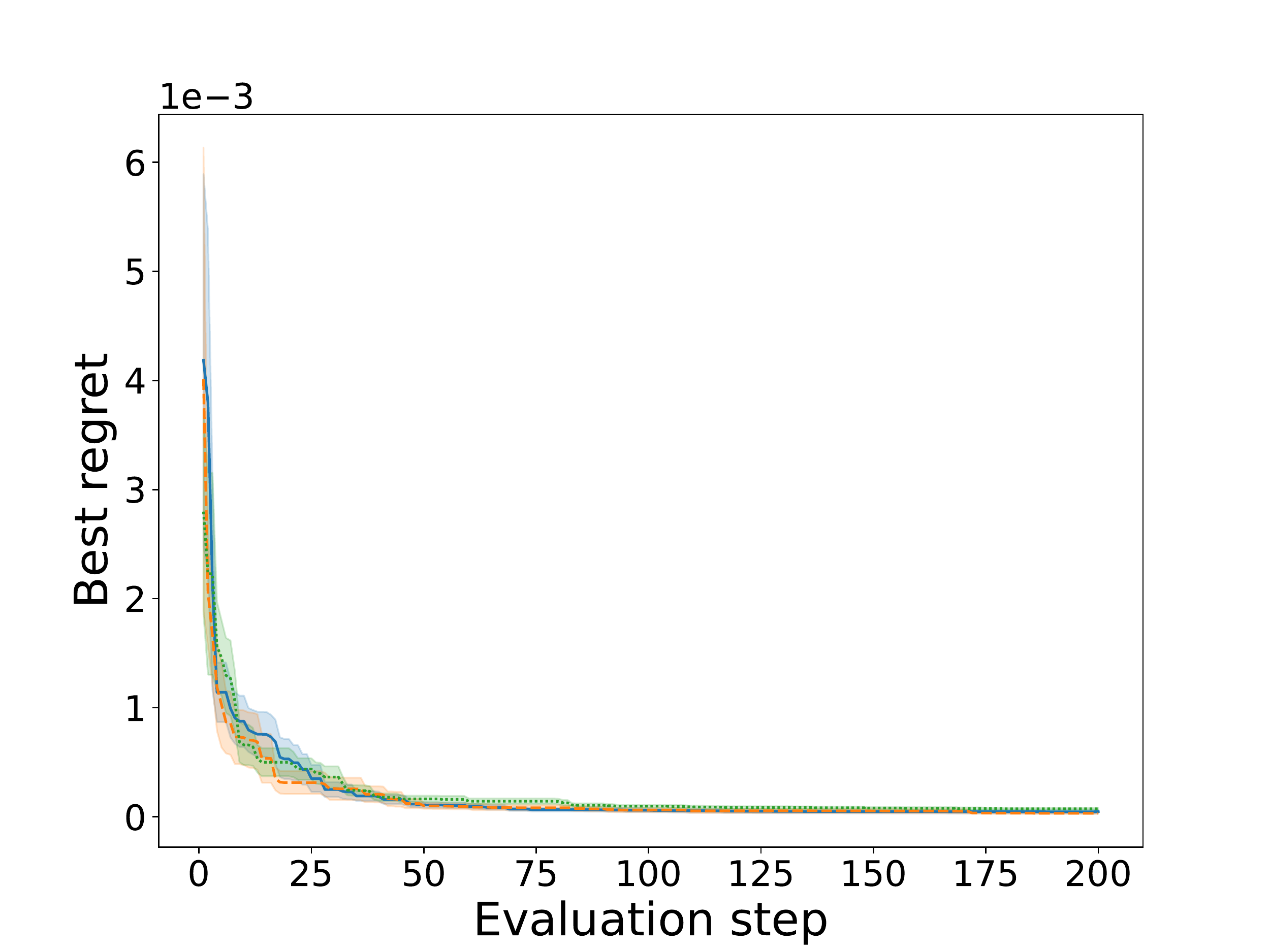}
  \caption{Naval propultion}
  \label{fig:np}
\end{subfigure}\\
\begin{subfigure}{0.31\textwidth}
  \centering
   \includegraphics[width=\linewidth]{figs/nas_pt.pdf}
  \caption{Parkinson telemonitoring}
  \label{fig:pt}
\end{subfigure}%
\begin{subfigure}{0.31\textwidth}
  \centering
   \includegraphics[width=\linewidth]{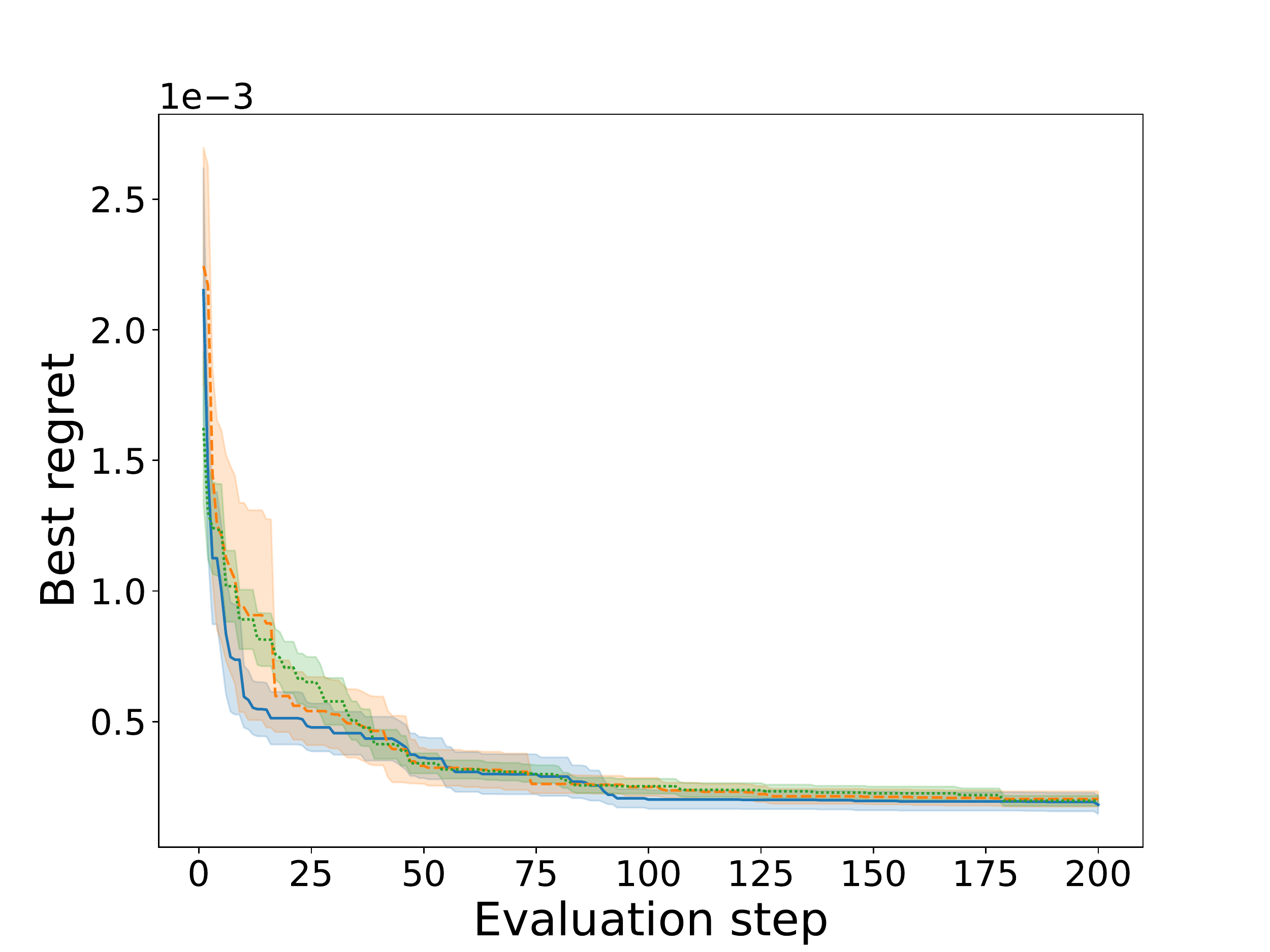}
  \caption{Slice Locatization}
  \label{fig:sl}
\end{subfigure}
\caption{Performance comparison on NAS benchmarks. Solid lines are the mean values over 20 seeds, and shaded areas correspond to standard error.}
\label{fig:nas}
\end{figure*}

\begin{figure*}[h!]
    \begin{subfigure}[b]{0.31\textwidth}
  \centering
   \includegraphics[width=\linewidth]{figs/lp_qiu.pdf}
  \caption{qiu}
  \label{fig:lp_qiu}
\end{subfigure}%
\begin{subfigure}{0.31\textwidth}
  \centering
   \includegraphics[width=\linewidth]{figs/lp_misc05inf.pdf}
  \caption{misc05inf}
  \label{fig:lp_misc05inf}
\end{subfigure}%
\begin{subfigure}{0.31\textwidth}
  \centering
   \includegraphics[width=\linewidth]{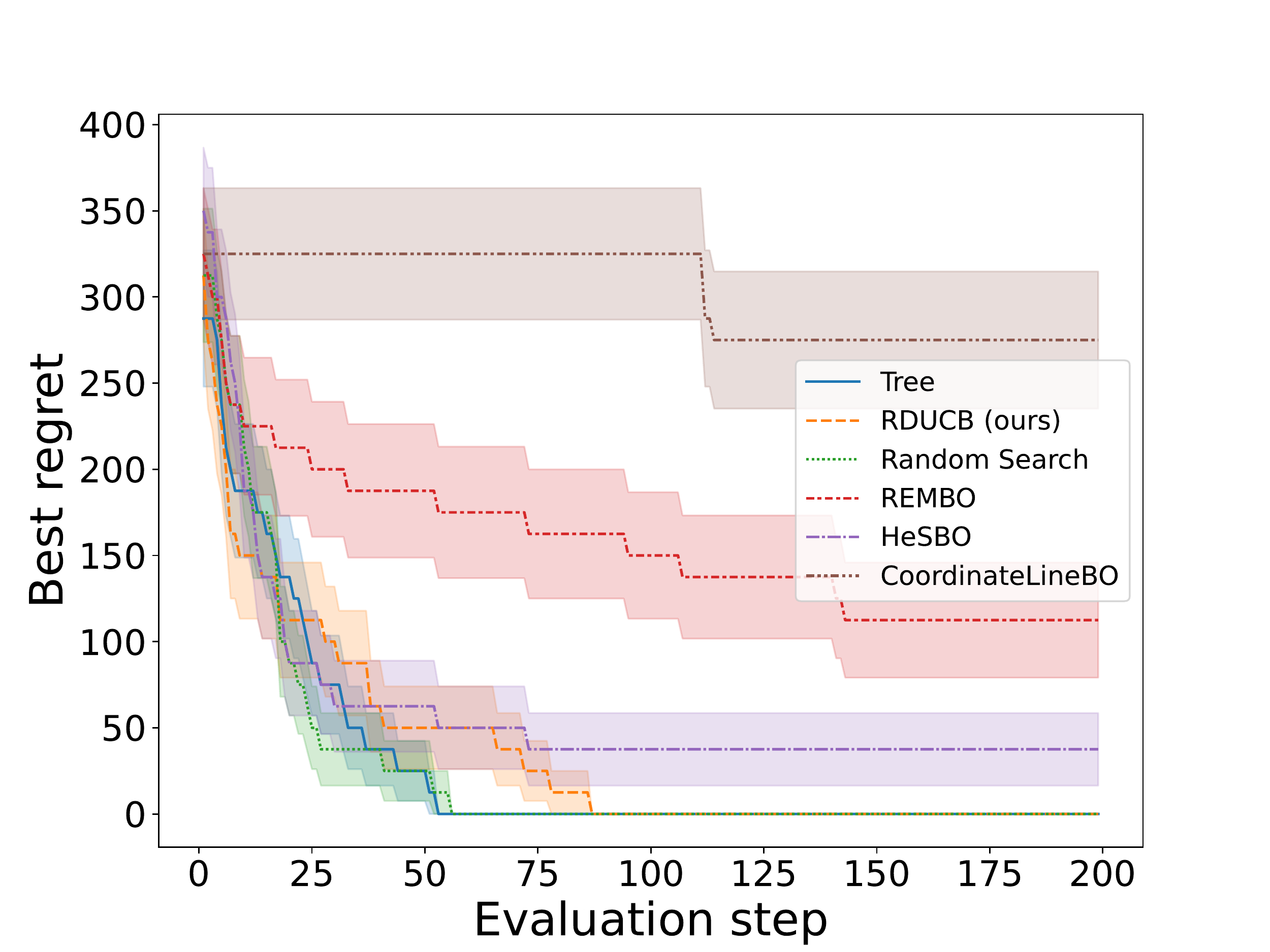}
  \caption{mtest4ma}
  \label{fig:lp_mtest4ma}
\end{subfigure}
\caption{Performance comparison on selected MIP hyperparameter tuning problems. Solid lines are the mean values over 40 seeds and shaded areas correspond to standard error.}
\label{fig:lp}
\end{figure*}

\begin{figure*}
\begin{subfigure}[b]{0.31\textwidth}
  \centering
   \includegraphics[width=\linewidth]{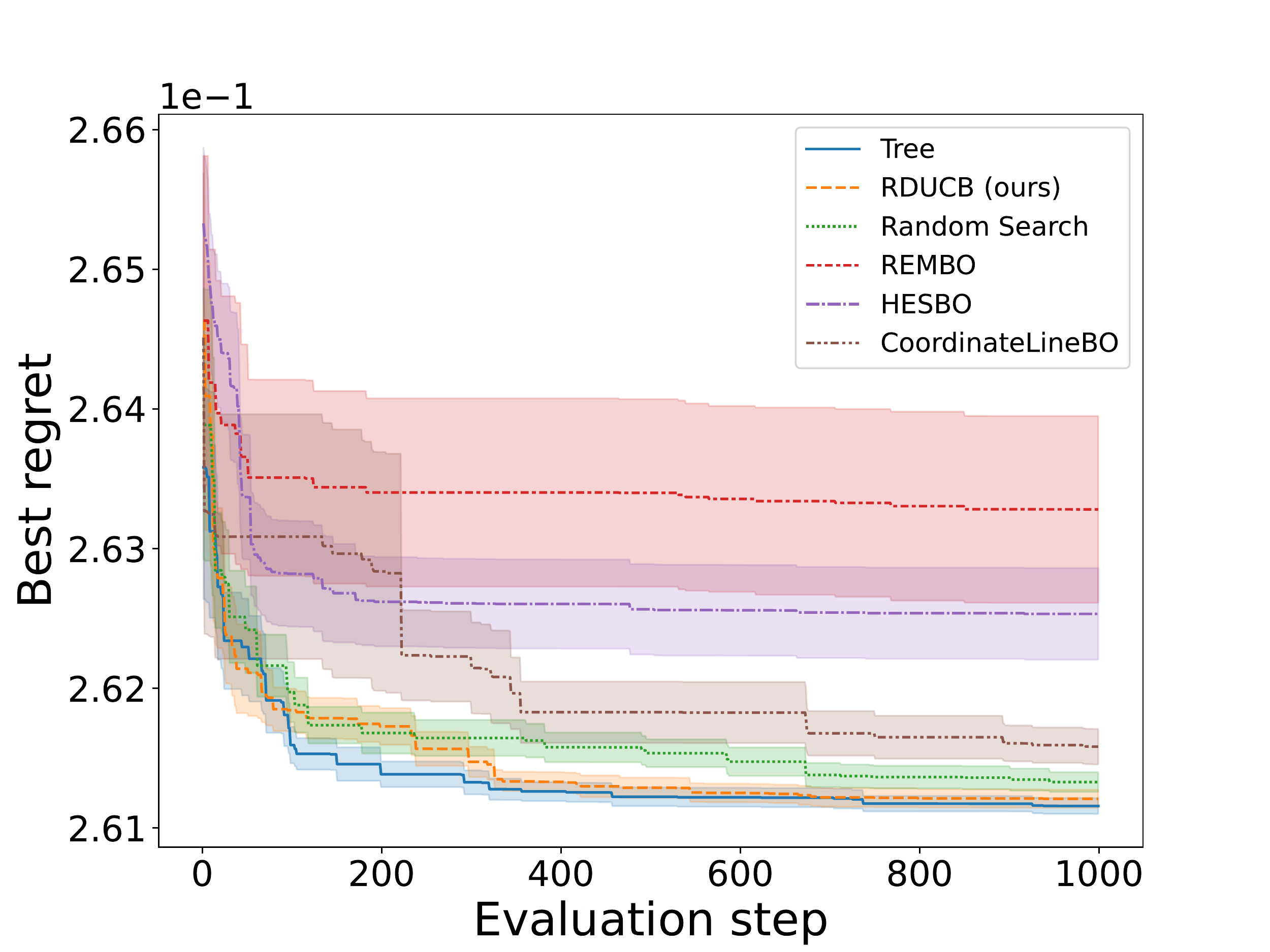}
 \caption{Breast Cancer (10 dim.)}
\end{subfigure}%
\begin{subfigure}[b]{0.31\textwidth}
  \centering
   \includegraphics[width=\linewidth]{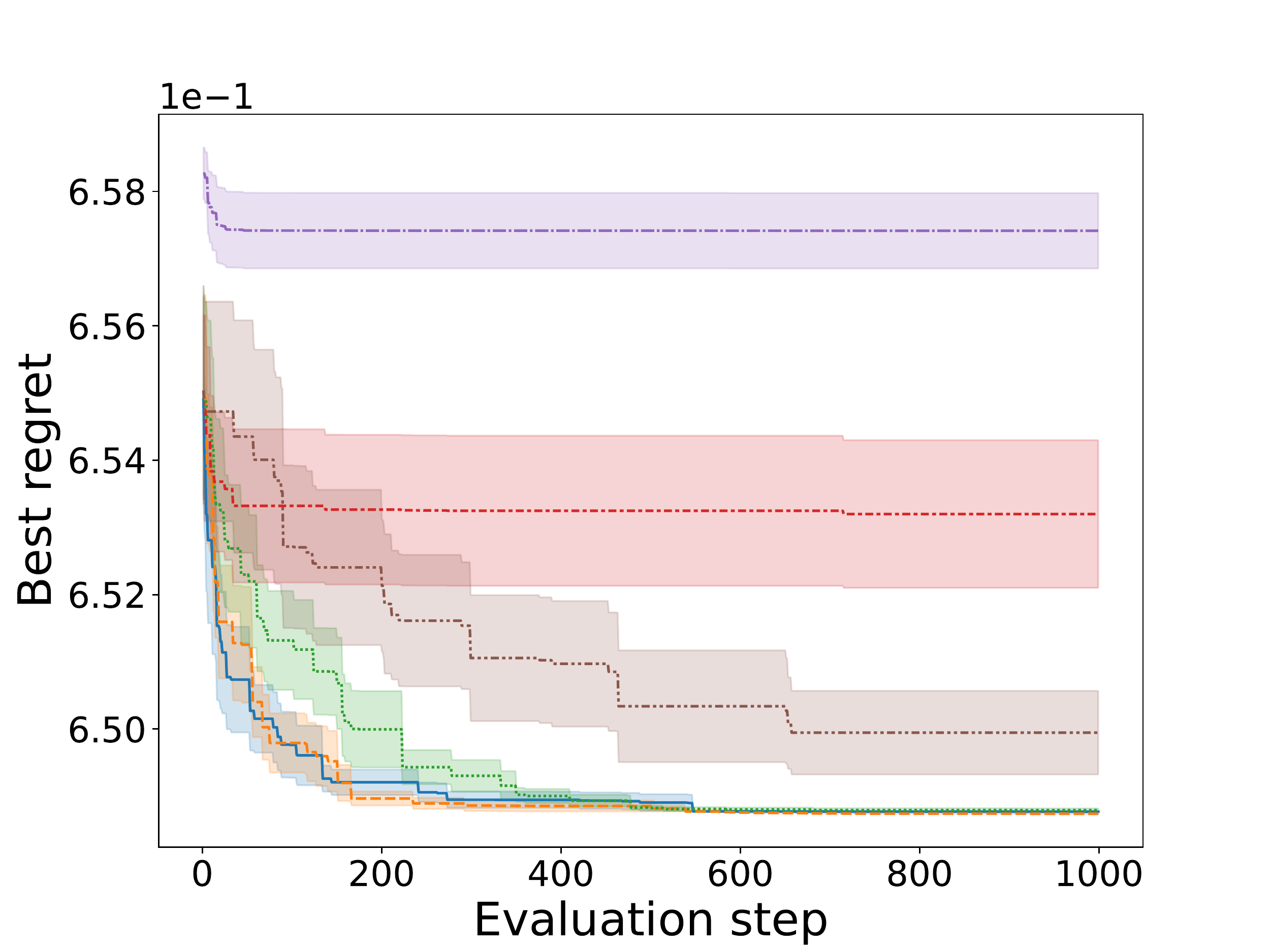}
 \caption{Diabetes (8 dim.)}
\end{subfigure}%
    \begin{subfigure}[b]{0.31\textwidth}
  \centering
   \includegraphics[width=\linewidth]{figs/lasso_dna.pdf}
 \caption{DNA (180 dim.)}
\end{subfigure}%

\caption{Performance comparison on LassoBench problems.}
 \label{fig:lasso_dna}
\end{figure*}
\clearpage
\section{Comparing RDUCB to Tree with different acquisition functions} 

\begin{figure*}[h!]
    \begin{subfigure}[b]{0.5\textwidth}
  \centering
   \includegraphics[width=\linewidth]{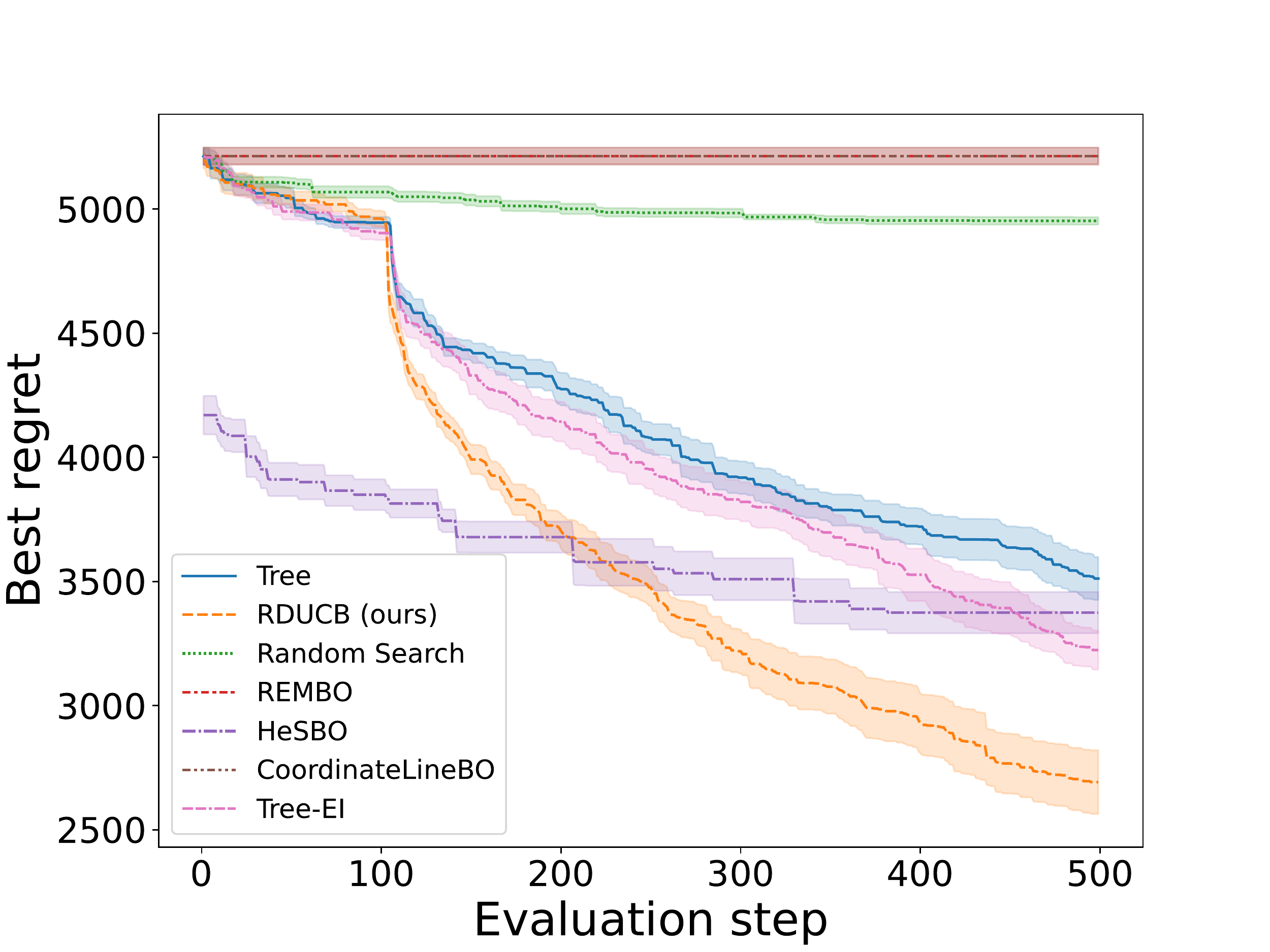}
  \caption{Stybtang250}
  \label{fig:stybtang250_ei}
\end{subfigure}%
\begin{subfigure}{0.5\textwidth}
  \centering
   \includegraphics[width=\linewidth]{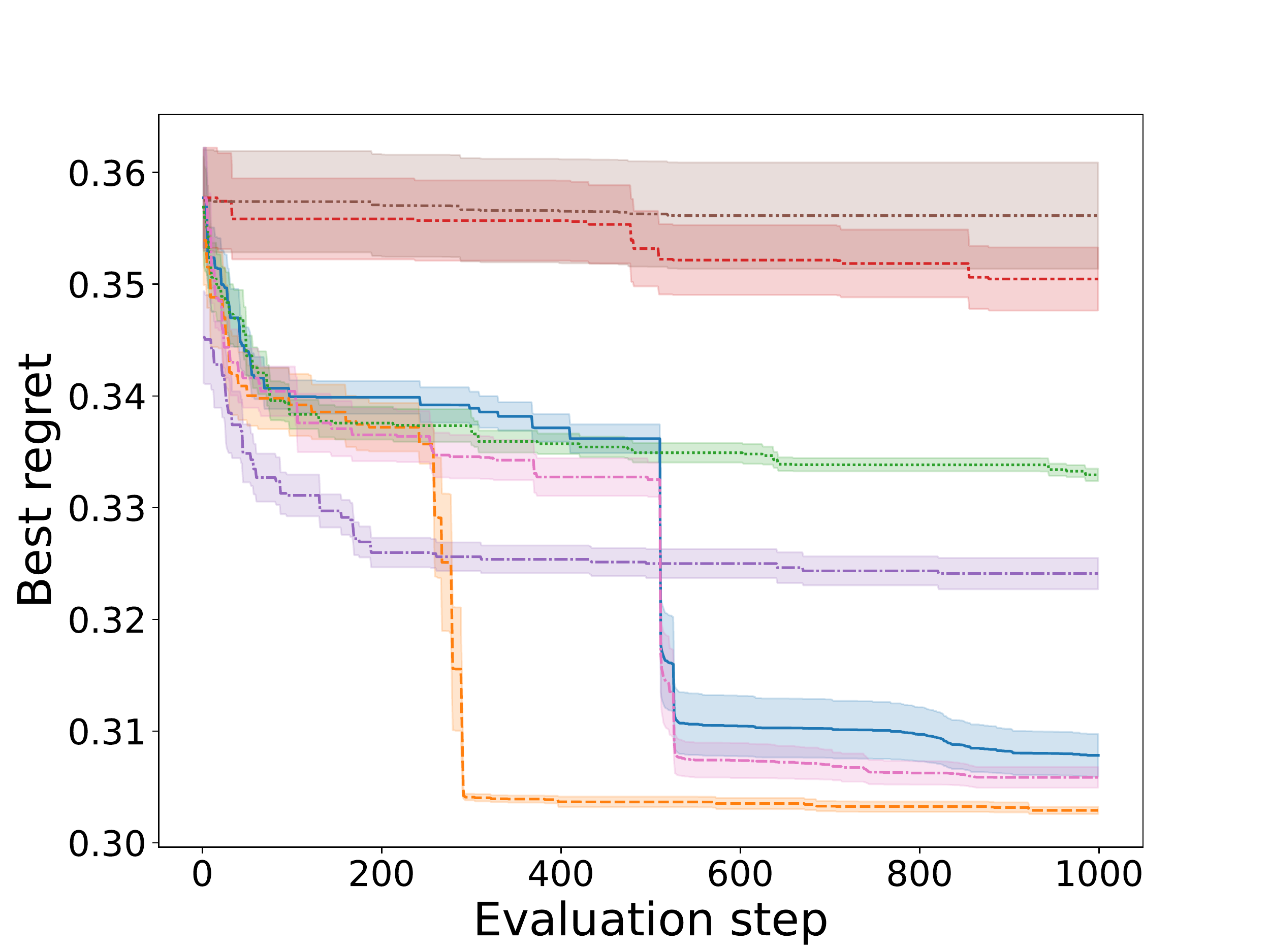}
  \caption{Lasso DNA}
  \label{fig:lasso_dna_ei}
\end{subfigure}%

\caption{Additional experiments over two highest dimensional tasks, comparing the improvement brought by RDUCB to the improvement brought by using a different acquisition function. Tree-EI stands for the Tree algorithm \cite{han2021high} utilising the expected-improvement acquisition function \cite{nguyen2017regret}. We develop an additive version of the expected improvement, where the term corresponding to component $c$ is given by $\alpha_c(\bm{x}|\mathcal{D}_{t-1}) = (\mu_{t-1, c}(\bm{x}) - \mu_{t-1}(\bm{x^+})) \Phi(\frac{\mu_{t-1, c}(\bm{x}) - \mu_{t-1, c}(\bm{x}^+)}{\sigma_{t-1, c}(\bm{x})}) + \sigma_{t-1, c}(\bm{x})\phi(\frac{\mu_{t-1, c}(\bm{x}) - \mu_{t-1, c}(\bm{x}^+)}{\sigma_{t-1, c}(\bm{x})})$, in which $x^+$ is the best point found so far and $\Phi()$ and $\phi()$ are Gaussian CDF and PDF, respectively. Shaded areas correspond to standard errors over 10 seeds. We see that although choosing a different acquisition function might bring some improvement, it is smaller compared to the improvement delivered by RDUCB.}
\label{fig:ei}
\end{figure*}

\end{document}